\PassOptionsToPackage{table}{xcolor}
\documentclass[11pt]{article}

\usepackage[final]{acl}

\usepackage{times}
\usepackage{latexsym}

\usepackage[T1]{fontenc}

\usepackage[utf8]{inputenc}

\usepackage{microtype}

\usepackage{inconsolata}

\usepackage{graphicx}

\usepackage{amsmath}
\usepackage{amsfonts}
\usepackage{amssymb}
\usepackage{wrapfig}
\usepackage{subcaption}
\usepackage{booktabs} %
\usepackage{multirow}
\usepackage{xcolor}
\definecolor{oursrow}{gray}{0.92}
\usepackage{listings}
\usepackage{hyperref}
\usepackage{xurl}

\usepackage{float}
\usepackage{amsthm}
\usepackage{enumitem}
\usepackage[capitalize,noabbrev]{cleveref}
\crefname{assumption}{Assumption}{Assumptions}

\setlist[itemize]{nosep,leftmargin=*,labelsep=0.4em}
\setlist[enumerate]{nosep,leftmargin=*,labelsep=0.4em}

\theoremstyle{plain}
\newtheorem{theorem}{Theorem}[section]

\newtheorem{lemma}[theorem]{Lemma}
\newtheorem{corollary}[theorem]{Corollary}
\theoremstyle{definition}

\newtheorem{assumption}[theorem]{Assumption}
\theoremstyle{remark}

\title{DSPA: Dynamic SAE Steering for Data-Efficient Preference Alignment}

\author{James Wedgwood \quad Aashiq Muhamed \quad Mona T. Diab \quad Virginia Smith \\
  Carnegie Mellon University \\
  \texttt{\{jwedgwoo, amuhamed, mdiab, smithv\}@cs.cmu.edu}}

\begin{document}

\maketitle

\begin{abstract}
\looseness=-1
Preference alignment is usually achieved by weight-updating training on preference data, which is costly and provides limited mechanistic visibility. We propose Dynamic SAE Steering for Preference Alignment (DSPA), an inference-time method that makes sparse autoencoder (SAE) steering prompt-conditional. From preference triples, DSPA computes a conditional-difference map linking prompt features to generation-control features; during decoding, it modifies only token-active latents, without base-model weight updates. Across Gemma-2-2B/9B and Qwen3-8B, DSPA improves MT-Bench and is competitive on AlpacaEval while preserving multiple-choice accuracy. Under restricted preference data, DSPA remains robust and can rival heavier alignment pipelines at a fraction of the compute. Finally, we audit the SAE features DSPA modifies, finding that preference directions are dominated by discourse and stylistic signals, and  provide theory clarifying the conditional-difference map estimate and when top-$k$ ablation is principled.\footnote{Code is available at \url{https://github.com/jtbwedgwood/dspa}.}
\end{abstract}

\section{Introduction}

\looseness=-1
Preference alignment is central to deploying large language models (LLMs), but common pipelines rely on weight-updating training on preference data (e.g., RLHF, DPO \citep{rafailov2023direct}), which can be costly and mechanistically opaque. Inference-time activation interventions \citep{zou2023representation,kong2024aligning,pham2024householder} are cheaper and reversible, but are often defined as dense, global directions; applying the same edit in every context can degrade open-ended generation and yield hard-to-audit tradeoffs \citep{wolf2024tradeoffs}.

\looseness=-1
Sparse autoencoders (SAEs) offer a more inspectable coordinate system: they decompose hidden states into sparse, named features that can be directly ablated or amplified \citep{rimsky2024steering,durmusevaluating}. However, preference alignment for open-ended generation is prompt-dependent. Static SAE feature sets can inject context-irrelevant signal and harm dialogue quality \citep{bayat2025steering}, motivating prompt-conditional feature selection and minimal, targeted edits.

We propose \textbf{Dynamic SAE Steering for Preference Alignment (DSPA)}, a prompt-conditional inference-time method that requires no base-model weight updates. DSPA uses an early--mid-layer \emph{input SAE} to featurize the prompt and a late-layer \emph{output SAE} as the intervention space most likely to influence generation \citep{Arad_2025}. Offline, we compute a sparse conditional-difference map $\mathbf{A}$ from preference triples that associates active prompt features with output features that differ between chosen and rejected responses. At inference, DSPA selects a small set of prompt features, scores output features via $\mathbf{A}$, and steers decoding by ablating and/or augmenting only those output latents that are active on the current token, reducing off-context perturbations relative to static steering.

\looseness=-1
We evaluate DSPA on Gemma-2-2B/9B and Qwen3-8B \citep{team2024gemma,yang2025qwen3technicalreport} using two open-ended generation benchmarks, MT-Bench \citep{zheng2023judgingllmasajudgemtbenchchatbot} and AlpacaEval \citep{alpaca_eval}, and multiple-choice benchmarks. DSPA improves MT-Bench across all three models; on AlpacaEval, it improves on Gemma-2-2B and Qwen3-8B, with modest changes on multiple-choice metrics. DSPA remains robust under restricted preference data (e.g., 250 samples; stable down to 100 on Gemma-2-2B). Because we intervene on named SAE features, we can audit what changes and find that preference directions are dominated by discourse and stylistic signals; our theory characterizes what $\mathbf{A}$ estimates and when top-$k$ ablation is a principled selection rule.
Our contributions include:

\begin{enumerate}
    \item We introduce DSPA, a prompt-conditional inference-time method that constructs a sparse association map from prompt SAE features (early/mid-layer) to generation-control SAE features (late-layer) via a conditional-difference map, and steers decoding by editing only token-active latents.

    \item We evaluate DSPA across Gemma-2-2B/9B and Qwen3-8B on open-ended generation benchmarks (MT-Bench, AlpacaEval) and multiple-choice benchmarks (MMLU, Arc-Easy, TruthfulQA-MC2, HellaSwag, Winogrande), showing competitive generation quality with minimal capability regression and robustness under preference-data restriction.
    
    \item We audit the SAE features DSPA modifies and find that preference gains are largely mediated by discourse and stylistic signals; we also provide a theoretical analysis of what the conditional-difference map estimates and why top-$k$ ablation is a principled selection rule.
\end{enumerate}

\section{Background and Related Work}
\label{sec:related}

\paragraph{Sparse Autoencoders.}
Superposition suggests many features share directions in activation space \citep{elhage2022toy}. Sparse autoencoders (SAEs) learn a sparse feature vector and a linear reconstruction of a layer's activations \citep{cunningham2023sparse,gao2024scaling}. Concretely, given activations $\mathbf{h}\in\mathbb{R}^{d_{\text{model}}}$, an SAE encodes $\mathbf{f}(\mathbf{h})=\sigma(W_{\text{enc}}\mathbf{h}+\mathbf{b}_{\text{enc}})$ and decodes $\hat{\mathbf{h}}(\mathbf{f})=W_{\text{dec}}\mathbf{f}+\mathbf{b}_{\text{dec}}$, trained to minimize reconstruction error with a sparsity penalty. We refer to the activation of feature $i$ as $f_i(\mathbf{h})$; sparse features enable auditable interventions by editing a small set of latents and decoding back to hidden-state space \citep{durmusevaluating}.

\paragraph{Preference Alignment.}
Most preference alignment methods update model weights using preference data (e.g., RLHF, DPO \citep{rafailov2023direct}), which can be effective but expensive and opaque. Inference-time interventions instead steer activations and are reversible. Representation engineering extracts linear control directions (mean-difference/PCA-style) and exposes alignment--capability tradeoffs \citep{zou2023representation,wolf2024tradeoffs}, while RAHF learns activity patterns from chosen/rejected data \citep{liu2024aligning}. Recent work refines steering with geometric and control-theoretic views \citep{kong2024aligning,pham2024householder} and studies category-specific directions for safety \citep{bhattacharjee2024towards}. DSPA builds on this inference-time line but makes steering prompt-conditional and operates in a sparse SAE basis for auditability.

\paragraph{SAEs for Preference Alignment.}
SAE steering has been used for refusal and safety control \citep{rimsky2024steering}, and static feature sets can work in constrained settings such as multiple-choice calibration \citep{bayat2025steering}. Feature selection remains central: effective steering depends on causal influence, not just activation frequency \citep{Arad_2025}. For open-ended generation, prior approaches often rely on learned steering policies or direct optimization in SAE space \citep{ferrao2025anatomy,bounhar2026yapo}. DSPA instead computes a map from prompt features to response-control features without training a steering policy or updating base-model weights.

\section{Dynamic SAE Steering for Preference Alignment}

\begin{figure*}[htbp]
  \centering
  \includegraphics[width=\linewidth]{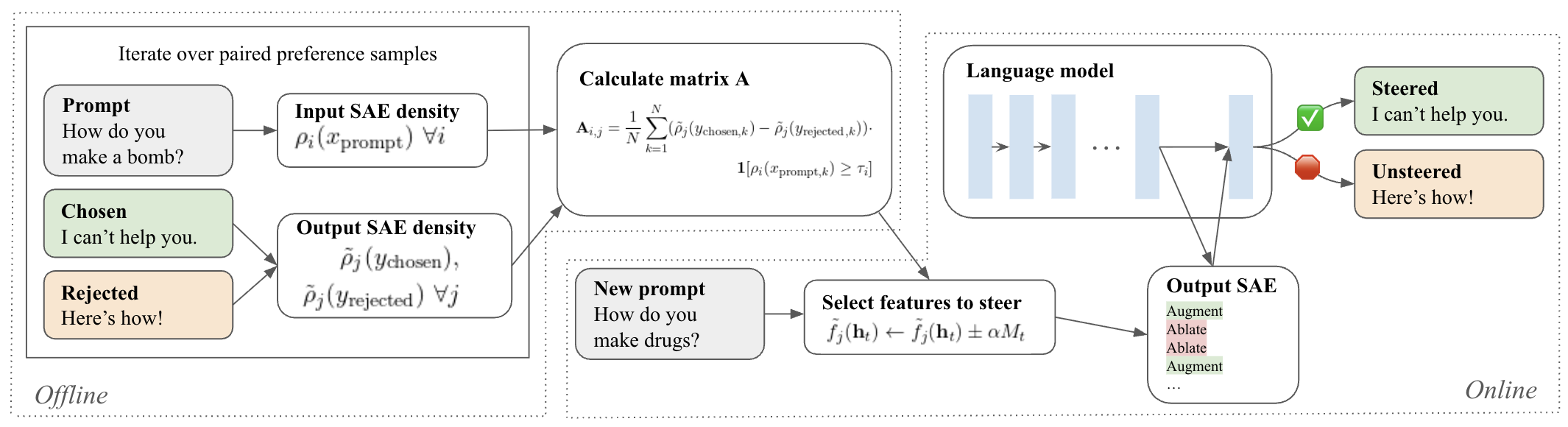}
  \caption{DSPA overview: offline conditional-difference map construction and prompt-conditional SAE steering at inference. Prompts and responses are illustrative. At decoding step $t$, $M_t$ is the largest active output-SAE latent and sets the scale of the edit.}
  \label{fig:method}
\end{figure*}

DSPA is a two-stage inference-time method. In an offline stage, we compute a sparse \emph{conditional-difference map} $\mathbf{A}$ from preference triples, associating prompt features with response-control features that differentiate chosen from rejected continuations. At inference, we use the prompt's active features to retrieve and rank a small set of output features and steer decoding by ablating and/or augmenting only token-active latents in the output SAE. Because interventions occur on sparse SAE features, the edited feature indices and token positions can be logged and audited; this does not imply that every edited latent has a reliable human-readable interpretation.
\paragraph{Relation to representation engineering.}
DSPA shares the use of chosen/rejected pairs to estimate preference directions \citep{zou2023representation,liu2024aligning}. If we ignore prompt conditioning and steer along a single dense direction, the construction reduces to global mean-difference steering. DSPA instead builds a prompt-conditional library of sparse templates (via input-feature gating) and applies \emph{token-conditional} edits in an SAE basis, modifying only latents that are active on the current token.

\subsection{Identifying Conditional Preference Features}
\label{sec:identifying}

\paragraph{Why two SAEs?}
The features that are useful to \emph{condition on} need not be the best ones to \emph{steer}. Following prior work suggesting that late-layer SAE features have stronger causal influence on generation \citep{Arad_2025}, we use an early--mid-layer \emph{input SAE} to featurize the prompt and a late-layer \emph{output SAE} as the intervention space; we validate this choice empirically (Section~\ref{sec:ablations}).
\paragraph{Notation.}
Let $\mathcal{D}=\{(x_k,y_k^+,y_k^-)\}_{k=1}^N$ be a dataset of preference triples, where $x$ is a prompt and $y^+,y^-$ are the chosen and rejected responses. Let $d_{\text{SAE}}$ be the SAE width (we assume the same width for input and output SAEs). We index input-SAE features by $i$ and output-SAE features by $j$, and we use a tilde to distinguish output-SAE quantities. Let $\mathbf{h}^{\text{in}}_t$ denote the LLM hidden state at token position $t$ at the input-SAE layer, and $\mathbf{h}^{\text{out}}_t$ the hidden state at the output-SAE layer. Then $f_i(\mathbf{h}^{\text{in}}_t)$ denotes input-SAE feature $i$ and $\tilde f_j(\mathbf{h}^{\text{out}}_t)$ denotes output-SAE feature $j$.

\paragraph{Activation densities.}
For a prompt $x=(x_1,\ldots,x_{T_x})$ and continuation $y=(y_1,\ldots,y_{T_y})$ (evaluated under teacher forcing after $x$), define
{\small
\begin{align*}
\rho_i(x) &:= \frac{1}{T_x}\sum_{t=1}^{T_x}\mathbf{1}\!\left[f_i(\mathbf{h}^{\text{in}}_t)>0\right],\\
\tilde\rho_j(x,y) &:= \frac{1}{T_y}\sum_{t=1}^{T_y}\mathbf{1}\!\left[\tilde f_j(\mathbf{h}^{\text{out}}_{T_x+t})>0\right].
\end{align*}
}
That is, $\rho_i(x)$ is the fraction of prompt tokens activating input feature $i$, and $\tilde\rho_j(x,y)$ is the fraction of response tokens activating output feature $j$ when processing the continuation $y$ after $x$.
\paragraph{Prompt gates.}
Choose a percentile $p\in(0,100)$ and let $\tau_i$ be the $p$th percentile of $\rho_i(x)$ over prompts in $\mathcal{D}$. Define the gate $g_i(x):=\mathbf{1}[\rho_i(x)\ge\tau_i]$ and the gate vector $g(x)\in\{0,1\}^{d_{\text{SAE}}}$.

\paragraph{Conditional-difference map.}
Let us define $\Delta\tilde{\boldsymbol\rho}(x,y^+,y^-)  :=\tilde{\boldsymbol\rho}(x,y^+)-\tilde{\boldsymbol\rho}(x,y^-)$, where $\tilde{\boldsymbol\rho}(x,y)\in[0,1]^{d_{\text{SAE}}}$ stacks $\tilde\rho_j(x,y)$. Write $\Delta\tilde{\boldsymbol\rho}_k:=\Delta\tilde{\boldsymbol\rho}(x_k,y_k^+,y_k^-)$ and compute $\mathbf{A}:=\frac{1}{N}\sum_{k=1}^N g(x_k)\,\Delta\tilde{\boldsymbol\rho}_k^\top$, where $\mathbf{A}\in\mathbb{R}^{d_{\text{SAE}}\times d_{\text{SAE}}}$.

Entrywise, $\mathbf{A}_{i,j}$ averages the chosen-minus-rejected activation-density difference for output feature $j$ over examples where prompt gate $i$ is active, so $\mathbf{A}_{i,j}$ estimates $\mathbb{E}[g_i(x)\,\Delta\tilde\rho_j]$. Note that $\mathbf{A}$ is not a learned reward model or steering policy; it is a fixed association map requiring no gradient-based optimization.
\paragraph{Sparsification.}
Although $\mathbf{A}$ has $d_{\text{SAE}}^2$ entries, it is sparse in practice. We zero out entries below a conservative threshold, reducing memory usage by 99.8\% (Appendix~\ref{app:base_vs_custom}).

\subsection{Inference-Time Intervention}

\paragraph{Prompt feature selection.}
Given a new prompt $x$, we compute $\rho_i(x)$ for all input features and select the top $k_{\text{prompt}}$ features by density, $S_{\text{prompt}}(x):=\text{top-}k_{\text{prompt}}\{\rho_i(x)\}$. Let $\hat g(x):=\mathbf{1}_{S_{\text{prompt}}(x)}\in\{0,1\}^{d_{\text{SAE}}}$, where $\mathbf{1}_S$ denotes the indicator vector of a set $S$.
\paragraph{Scoring output features.}
We score output features using the conditional-difference map, $\mathbf{s}(x):=\mathbf{A}^\top \hat g(x)$. We then select $k_{\text{diff}}$ features to augment and/or ablate: $S_{\text{augment}}(x):=\text{top-}k_{\text{diff}}\{\mathbf{s}_j(x)\}$ and $S_{\text{ablate}}(x):=\text{bottom-}k_{\text{diff}}\{\mathbf{s}_j(x)\}$.
\paragraph{Token-conditional intervention.}
$S_{\text{augment}}(x)$ and $S_{\text{ablate}}(x)$ specify \emph{which} output features to steer for prompt $x$; token-conditional steering edits only features active on the current token, avoiding off-context activations.

At token position $t$, we compute output-SAE latents $\tilde{\mathbf{f}}_t=\tilde{\mathbf{f}}(\mathbf{h}^{\text{out}}_t)$ and scale the step size by $M_t:=\smash[b]{\max_{j\in[d_{\text{SAE}}]} \tilde f_{t,j}}$. For $j\in S_{\text{augment}}(x)$ with $\tilde f_{t,j}>0$, we set $\tilde f_{t,j}\leftarrow \tilde f_{t,j}+\alpha M_t$; for $j\in S_{\text{ablate}}(x)$ with $\tilde f_{t,j}>0$, we set $\tilde f_{t,j}\leftarrow \max\{\tilde f_{t,j}-\alpha M_t,0\}$, yielding edited latents $\tilde{\mathbf{f}}'_t$. Let $\Delta\tilde{\mathbf{f}}_t:=\smash[b]{\tilde{\mathbf{f}}'_t-\tilde{\mathbf{f}}_t}$; we apply the residual edit by $\mathbf{h}^{\text{out}}_t \leftarrow \mathbf{h}^{\text{out}}_t + \smash[b]{\tilde W_{\text{dec}}\,\Delta\tilde{\mathbf{f}}_t}$. Because our SAEs use ReLU activations, this edits only already-active latents and clamps ablations at zero. In our main experiments we use ablation-only steering unless otherwise stated.

\subsection{Theoretical Analysis}
\label{sec:theory}

The preceding procedure is heuristic; we now provide formal justification for why the conditional-difference map recovers meaningful preference directions and why top-$k$ selection is principled. Proofs appear in Appendix~\ref{app:theory}.
\paragraph{Setup.}
We use the notation from Section~\ref{sec:identifying}: $x$ is the prompt and $y^+,y^-$ are the chosen/rejected responses. For a prompt $x$, let $g(x) \in \{0,1\}^{d_{\text{SAE}}}$ denote the \emph{prompt gate vector} with $g_i(x) := \mathbf{1}[\rho_i(x) \geq \tau_i]$, where $\rho_i(x)$ is the input-SAE activation density and $\tau_i$ is a fixed threshold. For a response $y$ generated after $x$, let $\tilde{\boldsymbol\rho}(y) \in [0,1]^{d_{\text{SAE}}}$ denote output-SAE activation densities over the response segment (equivalently $\tilde{\boldsymbol\rho}(x,y)$ in Section~\ref{sec:identifying}, suppressing $x$ for brevity). Given a preference triple $(x, y^+, y^-)$, define $\Delta\tilde{\boldsymbol\rho} := \tilde{\boldsymbol\rho}(y^+) - \tilde{\boldsymbol\rho}(y^-)$. The population analogue of our conditional-difference map is $\mathbf{A} := \mathbb{E}[g(x)\,\Delta\tilde{\boldsymbol\rho}^\top]$.

\begin{assumption}[Shared-covariance mean shift]
\label{ass:lda}
There exist a positive semidefinite matrix $\Sigma$, a scalar $c>0$, and a prompt-dependent vector $\boldsymbol\beta(x)$ such that, conditional on $x$, $\mathbb{E}[\Delta\tilde{\boldsymbol\rho} \mid x] = c\,\Sigma\boldsymbol\beta(x)$.
\end{assumption}

\begin{assumption}[Additive gating]
\label{ass:gating}
There exists a matrix $B \in \mathbb{R}^{d_{\text{SAE}} \times d_{\text{SAE}}}$ such that $\boldsymbol\beta(x) = B\,g(x)$. Let $M := \mathbb{E}[g(x)\,g(x)^\top]$ denote the gate Gram matrix.
\end{assumption}

\begin{theorem}[Factorization of $\mathbf{A}$]
\label{thm:identify_exact}
Under \cref{ass:lda,ass:gating}, $\mathbf{A}^\top = c\,\Sigma\,B\,M$. Consequently, rows of $\mathbf{A}$ mix multiple preference templates when gates co-activate. On subspaces where $M$ is invertible, $\Sigma^{-1}\mathbf{A}^\top M^{-1} \propto B$.
\end{theorem}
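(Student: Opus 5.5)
The plan is a direct computation of $\mathbf{A}^\top$ by the tower property, conditioning on the prompt $x$. The gate vector $g(x)$ is a deterministic function of $x$ once the thresholds $\tau_i$ are fixed, so it can be pulled out of the inner conditional expectation:
\[
\mathbf{A} = \mathbb{E}\bigl[g(x)\,\Delta\tilde{\boldsymbol\rho}^\top\bigr] = \mathbb{E}\bigl[g(x)\,\mathbb{E}[\Delta\tilde{\boldsymbol\rho}\mid x]^\top\bigr].
\]
Next I would apply \cref{ass:lda} to replace the inner conditional mean by $c\,\Sigma\boldsymbol\beta(x)$. Then \cref{ass:gating} replaces $\boldsymbol\beta(x)$ by $B\,g(x)$. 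Transposing gives
\[
\mathbf{A}^\top = \mathbb{E}\bigl[c\,\Sigma B\,g(x)g(x)^\top\bigr].
\]
Since $c$, $\Sigma$ and $B$ are deterministic, linearity of expectation gives $\mathbf{A}^\top = c\,\Sigma B\,\mathbb{E}[g g^\top] = c\,\Sigma B M$. Integrability is immediate, because $g\in\{0,1\}^{d_{\text{SAE}}}$ and $\Delta\tilde{\boldsymbol\rho}\in[-1,1]^{d_{\text{SAE}}}$ are bounded.

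For the ``mixing'' consequence, I would read the factorization entrywise. The $i$-th row of $\mathbf{A}$ is the $i$-th column of $c\,\Sigma BM$:
\[
c\,\Sigma\sum_{i'} B_{:,i'}\,M_{i',i}, \qquad M_{i',i}=\Pr[g_{i'}=g_i=1].
\]
So row $i$ is a combination of the templates $\Sigma B_{:,i'}$ for every gate $i'$ that co-activates with $i$. It reduces to a multiple of gate $i$'s own template only when $M$ is diagonal, i.e.\ when gates never co-fire.

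For the recovery statement, I would multiply $\mathbf{A}^\top = c\,\Sigma BM$ on the left by $\Sigma^{-1}$ and on the right by $M^{-1}$, which gives $\Sigma^{-1}\mathbf{A}^\top M^{-1} = cB$, hence proportionality with constant $c>0$.

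The main obstacle is making ``on subspaces where $M$ is invertible'' precise, since $\Sigma$ is only assumed PSD and $M$ may be singular (for example, always-on or never-on gates, or duplicate gates). I would first restrict to $\operatorname{range}(M)$ and $\operatorname{range}(\Sigma)$ and replace the inverses by Moore--Penrose pseudoinverses. This yields
\[
\Sigma^{+}\mathbf{A}^\top M^{+} = c\,P_\Sigma\,B\,P_M,
\]
where $P_\Sigma$ and $P_M$ are the orthogonal projectors onto those ranges. The identity is then the compression of $cB$ to those subspaces, and it is exactly $cB$ when both matrices are invertible. I would state the theorem's claim in this restricted sense.
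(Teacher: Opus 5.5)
Your proposal is correct and matches the paper's proof: condition on $x$, substitute $c\,\Sigma\boldsymbol\beta(x)$ and then $B\,g(x)$, pull out the constants to get $c\,\Sigma BM$, read the row mixing off the columns of $\Sigma BM$, and multiply by the inverses. Your pseudoinverse version $\Sigma^{+}\mathbf{A}^\top M^{+} = c\,P_\Sigma B P_M$ is a valid sharpening of the paper's informal ``on subspaces where $M$ is invertible'' clause, which also silently treats $\Sigma$ as invertible; one small correction is that a diagonal $M$ is sufficient but not strictly necessary for a row to be a pure template (e.g.\ when all templates are parallel).
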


The mixing is controlled when co-activation probabilities are small:
\begin{corollary}[Weak co-activation bound]
\label{cor:weak_coact}
Fix gate $i$ and let $\pi_{i'|i} := \Pr(g_{i'}=1 \mid g_i=1)$. Then $\|\mathbb{E}[\Delta\tilde{\boldsymbol\rho} \mid g_i=1] - c\,\Sigma\boldsymbol\beta^{(i)}\|_2 \leq c\,\|\Sigma\|_{2\to 2} \sum_{i'\neq i} \pi_{i'|i} \|\boldsymbol\beta^{(i')}\|_2$.
\end{corollary}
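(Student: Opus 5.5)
We condition on the event $\{g_i=1\}$ and reduce everything to \cref{ass:lda,ass:gating}. Throughout, $\boldsymbol\beta^{(i')}$ denotes the $i'$-th column of $B$, so that $\boldsymbol\beta(x)=B\,g(x)=\sum_{i'} g_{i'}(x)\,\boldsymbol\beta^{(i')}$.

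The first step is the conditional mean. Since $g$ is a function of $x$ and the event $\{g_i=1\}$ is $x$-measurable, the tower property and \cref{ass:lda} give
\[
\mathbb{E}[\Delta\tilde{\boldsymbol\rho}\mid g_i=1]=\mathbb{E}\bigl[\mathbb{E}[\Delta\tilde{\boldsymbol\rho}\mid x]\bigm| g_i=1\bigr]=c\,\Sigma\,\mathbb{E}[\boldsymbol\beta(x)\mid g_i=1].
\]
Substituting \cref{ass:gating} and using linearity,
\[
\mathbb{E}[\boldsymbol\beta(x)\mid g_i=1]=\sum_{i'}\mathbb{E}[g_{i'}(x)\mid g_i=1]\,\boldsymbol\beta^{(i')}=\boldsymbol\beta^{(i)}+\sum_{i'\neq i}\pi_{i'|i}\,\boldsymbol\beta^{(i')}.
\]
Here we used that $g_{i'}$ is $\{0,1\}$-valued, so its conditional expectation equals $\pi_{i'|i}$, and that $\pi_{i|i}=1$.

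The second step is the bound. Subtracting $c\,\Sigma\boldsymbol\beta^{(i)}$ leaves the error $c\,\Sigma\sum_{i'\neq i}\pi_{i'|i}\boldsymbol\beta^{(i')}$. The operator-norm inequality $\|\Sigma v\|_2\le\|\Sigma\|_{2\to2}\|v\|_2$ applies, and then the triangle inequality, with $c>0$ and $\pi_{i'|i}\ge 0$, bounds the norm of the sum by $\sum_{i'\neq i}\pi_{i'|i}\|\boldsymbol\beta^{(i')}\|_2$. This yields exactly the stated inequality.

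The only delicate point is bookkeeping. The conditional expectation requires $\Pr(g_i=1)>0$, which we assume implicitly. We must also confirm that the $\boldsymbol\beta^{(i')}$ in the corollary are the columns of $B$, matching the factorization in \cref{thm:identify_exact}. Once the identity for $\mathbb{E}[\boldsymbol\beta(x)\mid g_i=1]$ is in place, the inequality is routine.
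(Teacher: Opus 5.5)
Your proposal is correct and follows the paper's proof essentially step for step. Both use the tower property with \cref{ass:lda} to reach $c\,\Sigma\,\mathbb{E}[\boldsymbol\beta(x)\mid g_i=1]$, expand via \cref{ass:gating} into $\boldsymbol\beta^{(i)}+\sum_{i'\neq i}\pi_{i'|i}\boldsymbol\beta^{(i')}$, and then apply the operator-norm bound and the triangle inequality; your remarks on $\Pr(g_i=1)>0$ and the $x$-measurability of the conditioning event make explicit points the paper leaves implicit.
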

\paragraph{Connection to DSPA.}
At inference, DSPA computes $\mathbf{s}(x) := \mathbf{A}^\top \hat{g}(x)$ where $\hat{g}(x)$ is a truncated indicator over active prompt features. By \cref{thm:identify_exact}, $\mathbf{s}(x) = c\,\Sigma\,B\,M\,\hat{g}(x)$, a mixed estimate of the prompt-conditional preference direction. When $M \approx \mathrm{diag}(p)$, this reduces to a reweighted sum of per-gate templates. Under a linear utility model, selecting output features with the most negative scores for ablation is optimal for a fixed budget (\cref{thm:topk}; Appendix~\ref{app:theory}).
\paragraph{Implications.}
These results justify several design choices. First, the factorization $\mathbf{A}^\top = c\,\Sigma\,B\,M$ shows that $\mathbf{A}$ is a principled estimator of prompt-conditional preference directions even though it is constructed by simple averaging, not optimization. Second, \cref{cor:weak_coact} explains why truncation to the top $k_\text{prompt}$ gates is reasonable: when prompt features activate approximately independently, each row of $\mathbf{A}$ approximates a clean per-gate preference template with bounded contamination from co-activating gates. This also clarifies why static (non-prompt-conditional) steering is brittle for open-ended generation: it ignores co-activation structure entirely. Third, \cref{thm:topk} justifies top-$k$ ablation as the optimal selection rule under a linearized utility model. Finally, the finite-sample concentration result (\cref{lem:concentration}, Appendix~\ref{app:theory}) connects to our data-efficiency findings: frequently activated gates yield stable conditional estimates even with limited data, while rare gates are noisy, motivating our percentile thresholding and conservative sparsification of $\mathbf{A}$.

\section{Experiments and Results}

\subsection{Models, Data, and SAEs}
\label{sec:experimental-setup}

\begin{figure*}[htbp]
  \centering
  \includegraphics[width=0.9\linewidth]{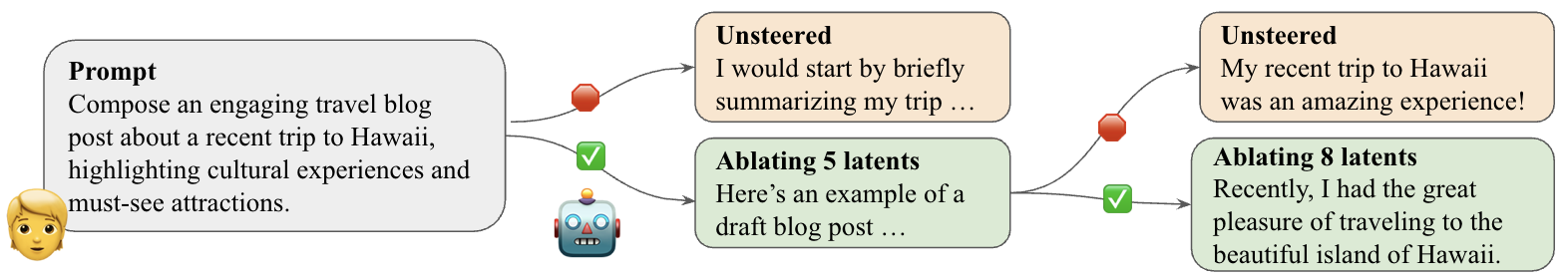}
  \caption{MT-Bench (turn 1) example on Gemma-2-9B: Base Model vs. DSPA.}
  \label{fig:example}
\end{figure*}

Figure~\ref{fig:example} shows a qualitative example illustrating how DSPA changes open-ended responses.
\paragraph{Training Data} We use a binarized version of UltraFeedback \citep{cui2024ultrafeedbackboostinglanguagemodels} (61K prompts with chosen/rejected responses).\footnote{HF: \texttt{argilla/ultrafeedback-binarized-preferences-\allowbreak{}cleaned}.} We use these preference pairs to construct $\mathbf{A}$ and, where applicable, to train weight-updating baselines (DPO/RAHF) and derive inference-time steering controls (RepE direction; Static-SAE feature set). Appendix~\ref{app:preference_sources} tests whether the map transfers to two non-UltraFeedback preference sources.
\paragraph{Models} Following common alignment setups \citep{ouyang2022traininglanguagemodelsfollow,liu2024aligning}, we start from supervised fine-tuned (SFT) base models on HH-RLHF \citep{bai2022traininghelpfulharmlessassistant} for Gemma-2-2B/9B and Qwen3-8B \citep{team2024gemma, yang2025qwen3technicalreport}.\footnote{HF: \texttt{Anthropic/hh-rlhf}.} We use these architectures because high-quality open SAEs are available; ``Base Model'' refers to the HH-RLHF SFT checkpoint.
\paragraph{SAEs} We use SAEs fine-tuned on HH-RLHF (Appendix~\ref{app:interp}), which yields better-aligned features. For Gemma, we fine-tune Gemma Scope JumpReLU SAEs \citep{lieberum2024gemmascopeopensparse,rajamanoharan2024jumping}; for Qwen, we use released BatchTopK SAEs \citep{bussmann2024batchtopksparseautoencoders, karvonen2025qwen3saes}.\footnote{HF: \texttt{adamkarvonen/qwen3-8b-saes}.} Further details on SAE fine-tuning may be found in Appendix~\ref{app:base_vs_custom}.

\paragraph{Baselines}
\looseness=-1
We include the HH-RLHF supervised fine-tuned starting point (``Base Model'') as a reference, and compare DSPA to four baselines: DPO \citep{rafailov2023direct} fine-tuned on UltraFeedback chosen/rejected pairs \citep{cui2024ultrafeedbackboostinglanguagemodels}, a dense representation-engineering control direction (RepE) estimated from chosen/rejected data \citep{zou2023representation}, a static SAE steering baseline that edits a fixed globally selected feature set (Static-SAE) \citep{ferrao2025anatomy,bayat2025steering}, and an instruction-style prompt prefix (Prompt Eng) \citep{liu2024aligning,wu2025axbenchsteeringllmssimple}.
DPO updates base-model weights, whereas DSPA/RepE/Static-SAE are inference-time activation interventions. In Section~\ref{sec:data_efficiency}, we additionally compare against RAHF-SCIT \citep{liu2024aligning}, a two-stage fine-tuning pipeline, under a restricted-data setting.
Full hyperparameters and implementation details are in Appendix~\ref{app:baselines}.

\subsection{Main Results: Open-Ended and Multiple-Choice Benchmarks}

\begin{table*}[t]
\small
\caption{Open-ended and multiple-choice results. Open-ended scores are averaged over three generation seeds; multiple-choice scores are deterministic. Bold = best; underlined = second best within each model group. Arc = Arc-Easy; TQA = TruthfulQA-MC2. See Appendix~\ref{app:experiment}.}
\label{tab:gemma-combined}
\centering
\setlength{\tabcolsep}{5pt}
\begin{tabular}{@{}cl ccccccc@{}}
\toprule
& & \multicolumn{2}{c}{\textit{Open-ended}} & \multicolumn{5}{c}{\textit{Multiple-choice}} \\
\cmidrule(lr){3-4} \cmidrule(lr){5-9}
& & \bf MT-Bench & \bf AlpacaEval & \bf MMLU & \bf Arc & \bf TQA & \bf HellaSwag & \bf Winogrande \\
\midrule
\multirow{6}{*}{\rotatebox{90}{\footnotesize Gemma-2-2B}}
& \cellcolor{oursrow} DSPA (ours)   & \cellcolor{oursrow}\bf 4.39 & \cellcolor{oursrow}\textbf{19.30} & \cellcolor{oursrow}\bf 48.1 & \cellcolor{oursrow}80.4 & \cellcolor{oursrow}40.4 & \cellcolor{oursrow}74.2 & \cellcolor{oursrow}65.0 \\
& DPO           & \underline{4.36} & 15.82 & 47.9 & \bf 81.0 & 39.8 & \bf 74.3 & 65.2 \\
& RepE          & 4.32 & 16.94 & 48.0 & 80.3 & 40.3 & \bf 74.3 & 65.1 \\
& Static-SAE    & 3.49 & 9.33  & 48.0 & 80.6 & 39.5  & 74.2 & \bf 65.7 \\
& Prompt Eng    & 3.84 & 15.28 & 47.8 & 80.8 & \bf 41.0 & 74.0 & 64.3 \\
& Base Model    & 3.91 & \underline{17.54} & 47.9 & 80.4 & 40.2 & 74.2 & \bf 65.7 \\
\midrule
\multirow{6}{*}{\rotatebox{90}{\footnotesize Gemma-2-9B}}
& \cellcolor{oursrow} DSPA (ours)   & \cellcolor{oursrow}\underline{5.02} & \cellcolor{oursrow}20.00 & \cellcolor{oursrow}\bf 60.6 & \cellcolor{oursrow}\bf 83.8 & \cellcolor{oursrow}45.4 & \cellcolor{oursrow}77.3 & \cellcolor{oursrow}\bf 72.5 \\
& DPO           & \textbf{5.05} & 17.39 & 60.4 & 83.6 & 44.1 & 77.3 & \bf 72.5 \\
& RepE          & 4.36 & 18.88 & 60.4 & \bf 83.8 & 45.5 & 77.5 & \bf 72.5 \\
& Static-SAE    & 4.33 & 13.56 & \bf 60.6 & 83.6 & 44.1 & \bf 77.6 & 72.5 \\
& Prompt Eng    & 4.60 & \textbf{22.51} & 60.0 & 83.7 & \bf 47.2 & 77.4 & 71.9 \\
& Base Model    & 4.39 & \underline{21.12} & 60.0 & 83.7 & 45.5 & 77.4 & 71.9 \\
\midrule
\multirow{6}{*}{\rotatebox{90}{\footnotesize Qwen3-8B}}
& \cellcolor{oursrow} DSPA (ours)   & \cellcolor{oursrow}\underline{4.86} & \cellcolor{oursrow}\underline{17.83} & \cellcolor{oursrow}71.2 & \cellcolor{oursrow}84.3 & \cellcolor{oursrow}42.4 & \cellcolor{oursrow}72.6 & \cellcolor{oursrow}73.8 \\
& DPO           & \textbf{5.52} & \bf 24.60 & \bf 72.6 & \bf 85.5 & \bf 45.9 & \bf 73.0 & 73.4 \\
& RepE          & 4.48 & 12.95 & 71.2 & 84.4 & 41.0 & 71.7 & 73.1 \\
& Static-SAE    & 3.24 & 8.58 & 71.2 & 84.1 & 42.4 & 72.7 & \bf 74.0 \\
& Prompt Eng    & 3.88 & 11.74 & 71.0 & 84.4 & 42.6 & 71.7 & 72.8 \\
& Base Model    & 4.12 & 13.47 & 71.1 & 84.4 & 40.9 & 71.6 & 72.0 \\
\bottomrule
\end{tabular}
\end{table*}

We evaluate open-ended generation with MT-Bench \citep{zheng2023judgingllmasajudgemtbenchchatbot} and AlpacaEval \citep{alpaca_eval}, both using LLM-as-a-judge protocols. We use GPT-4o as the MT-Bench judge; for AlpacaEval, we use the \texttt{alpaca\_eval\_llama3\_70b\_fn} annotator, so absolute AlpacaEval scores are not directly comparable to leaderboards that use different judges. To monitor general capability, we report multiple-choice benchmarks (MMLU, Arc-Easy, TruthfulQA-MC2, HellaSwag, Winogrande) via the Language Model Evaluation Harness \citep{eval-harness} using standard settings (Appendix~\ref{app:experiment}).

Table~\ref{tab:gemma-combined} summarizes results (open-ended scores averaged over three generation seeds; small differences should be interpreted cautiously). On MT-Bench, DSPA improves over the Base Model and over other inference-time baselines for all three models, and it matches or slightly exceeds DPO on Gemma-2-2B without weight updates. DPO is slightly higher on Gemma-2-9B MT-Bench and substantially stronger on both open-ended metrics for Qwen3-8B. On AlpacaEval, DSPA improves over the Base Model on Gemma-2-2B and Qwen3-8B, but underperforms both Prompt Eng and the Base Model on Gemma-2-9B. Thus, the results support a competitive lightweight alternative, not uniform dominance over DPO. Our DPO runs used standard DPO/TRL-style settings and a limited sweep rather than extensive per-model tuning; stronger DPO results may be possible at additional tuning cost (Appendix~\ref{app:baselines}). Static-SAE consistently degrades open-ended scores, supporting the need for prompt-conditional feature selection rather than a fixed global feature set. Multiple-choice metrics change only modestly across methods, suggesting limited capability regression. In Appendix~\ref{app:mt_bench}, we further break down MT-Bench scores by category.

Because both open-ended benchmarks use automated judges, style sensitivity remains a limitation. A simple verbosity explanation is not supported by our diagnostics: DSPA's Gemma AlpacaEval outputs are slightly shorter than the Base Model outputs, and Gemma-2-9B MT-Bench gains also occur in Coding, Math, and STEM rather than only Roleplay or Writing (Appendix~\ref{app:mt_bench}). These checks do not replace controlled human evaluation or length- and style-controlled judging.

\subsection{Anatomy of Preference in SAE Space}
\label{sec:interp}

\begin{table*}[t]
\caption{Top five ablate/augment-set features, with \texttt{gpt-5-mini} interpretations.}
\label{tab:interp1}
\small
\centering
\begin{tabular}{@{}clp{10.5cm}@{}}
\toprule
& \bf Feature & \bf Interpretation \\
\midrule
\multirow{5}{*}{\rotatebox{90}{\footnotesize Ablate}}
& 11569 & Filler phrases used to smooth conversation and request clarification \\
& 10776 & Questions or statements involving illegal or clearly harmful activities \\
& 6345 & Polite opening phrases that frame what follows \\
& 11287 & First-person statements expressing personal views or experiences \\
& 12550 & Short connective phrases linking ideas across clauses \\
\midrule
\multirow{5}{*}{\rotatebox{90}{\footnotesize Augment}}
& 141 & Common function words and grammatical connectors without semantic content \\
& 9825 & Clarification and intent-seeking phrases linking questions and follow-ups \\
& 10075 & Topic-naming or subject-introducing words and phrases \\
& 1030 & Short list items and connectors enumerating entities or phrases \\
& 11031 & Affirmative explanatory openings that introduce a helpful continuation \\
\bottomrule
\end{tabular}
\end{table*}

DSPA exposes a more inspectable intervention trace than dense steering and fine-tuning methods: the sparse feature indices edited at each token can be recorded directly. Human-readable descriptions of those features remain approximate, however, and an edited feature should not be treated as a complete causal explanation of the generated text. Appendix~\ref{app:token_audit} provides an illustrative token-level audit. The aggregate analysis below is consistent with the theoretical expectation from Section~\ref{sec:theory}: because $\mathbf{A}$ averages over many prompts, the features that emerge most strongly are high-frequency, broadly reusable interactional signals rather than prompt-specific content.

We approximate frequently steered output features by ranking columns $j$ of $\mathbf{A}$ by $\sum_i \mathbf{A}_{i,j}$: the top/bottom 50 form ``augment''/``ablate'' sets (Gemma-2-9B, $\ell_{\text{output}} = 39$). We find that these account for most features steered in practice: on the first turn of MT-Bench, the ablated features form a strict subset of our ablate set. In an auxiliary augment+ablate pass used only for this coverage check, all but four augmented features belong to the augment set.

We generate an explanation for each of these features via an interpretability pipeline using \texttt{gpt-5-mini} 
as a judge (see Appendix~\ref{app:interp-prompts}). As these interpretations are LLM-generated and have not been systematically validated by humans, they should be treated as approximate. Descriptions of the top five augment and ablate set features, ordered by magnitude $\left|\sum_i \mathbf{A}_{i,j}\right|$, appear in Table~\ref{tab:interp1}.
We also prompt \texttt{gpt-5-mini} to categorize each feature into one of four categories:
\begin{enumerate}
    \item \textbf{Content:} Topic-bearing or referential language with substantive semantic meaning
    \item \textbf{Discourse:} Language that manages conversational flow or interaction rather than content
    \item \textbf{Grammatical:} Grammatical function words with minimal standalone meaning
    \item \textbf{Structure:} Tokens whose primary role is positional or structural rather than semantic
\end{enumerate}

\begin{table}[t]
\caption{Category breakdown for ablate/augment sets (50 each) vs.\ 50 random SAE features.}
\label{tab:interp2}
\small
\centering
\begin{tabular}{@{}lccc@{}}
\toprule
\bf Category & \bf Ablate & \bf Augment & \bf Random \\
\midrule
Content & 5 & 7 & 18 \\
\rowcolor{oursrow} Discourse & 34 & 28 & 15 \\
Grammatical & 10 & 10 & 11 \\
Structure & 1 & 5 & 6 \\
\bottomrule
\end{tabular}
\end{table}

Table~\ref{tab:interp2} shows the breakdown by category for augment and ablate sets, compared to a sample of 50 random features as a baseline. Preference directions are dominated by discourse and style markers: both the augment and ablate sets contain far more Discourse and far fewer Content features than the random set. This is consistent with the view that preference-relevant signals for open-ended generation are primarily interactional --- tone, hedging, clarification --- rather than about specific concepts. Notably, relatively few features in either set are Structure features, indicating that DSPA relies on conversational signals rather than formatting like lists or code. This contrasts with another recent SAE steering method, FSRL \citep{ferrao2025anatomy}, which primarily relies on features related to ``structural presentation elements.'' Safety-specific signal is limited: the only directly safety-related feature is the second-most ablated one, index 10776, representing \textit{Questions or statements involving illegal or clearly harmful activities}.

\subsection{Data Efficiency and Compute Cost}
\label{sec:data_efficiency}

\begin{table*}[t]
\small
\caption{Restricted-data results ($N{=}250$ preference triples). Open-ended scores are averaged over three generation seeds; multiple-choice scores are deterministic. Bold = best; underlined = second best within each model. 2B = Gemma-2-2B; 9B = Gemma-2-9B; Qwen = Qwen3-8B.}
\label{tab:restricted-data}
\centering
\setlength{\tabcolsep}{5pt}
\begin{tabular}{@{}cl ccccccc@{}}
\toprule
& & \multicolumn{2}{c}{\textit{Open-ended}} & \multicolumn{5}{c}{\textit{Multiple-choice}} \\
\cmidrule(lr){3-4} \cmidrule(lr){5-9}
& & \bf MT-Bench & \bf AlpacaEval & \bf MMLU & \bf Arc & \bf TQA & \bf HellaSwag & \bf Winogrande \\
\midrule
\multirow{4}{*}{\rotatebox{90}{\footnotesize 2B}}
& \cellcolor{oursrow} DSPA & \cellcolor{oursrow}\underline{3.95} & \cellcolor{oursrow}\underline{16.81} & \cellcolor{oursrow}\bf 48.0 & \cellcolor{oursrow}80.4 & \cellcolor{oursrow}40.4 & \cellcolor{oursrow}74.2 & \cellcolor{oursrow}65.0 \\
& RAHF-SCIT & 3.71 & \textbf{19.13} & 45.8 & 78.7 & \bf 40.6 & 68.8 & \bf 66.3 \\
& DPO & 3.91 & 15.42 & 47.9 & \bf 80.8 & 39.6 & 74.0 & 64.6 \\
& RepE & \textbf{3.97} & 15.80 & \bf 48.0 & 80.5 & 40.3 & \bf 74.3 & 65.0 \\
\midrule
\multirow{4}{*}{\rotatebox{90}{\footnotesize 9B}}
& \cellcolor{oursrow} DSPA & \cellcolor{oursrow}\textbf{4.82} & \cellcolor{oursrow}\underline{21.27} & \cellcolor{oursrow}\bf 60.9 & \cellcolor{oursrow}83.8 & \cellcolor{oursrow}45.4 & \cellcolor{oursrow}77.5 & \cellcolor{oursrow}72.5 \\
& RAHF-SCIT & \underline{4.63} & \textbf{22.64} & 58.8 & 82.6 & 45.4 & 73.6 & 72.4 \\
& DPO & 4.10 & 16.52 & 60.5 & \bf 84.6 & 44.8 & \bf 77.6 & \bf 72.8 \\
& RepE & 4.47 & 20.62 & 60.5 & 83.9 & \bf 45.5 & 77.3 & 72.4 \\
\midrule
\multirow{4}{*}{\rotatebox{90}{\footnotesize Qwen}}
& \cellcolor{oursrow} DSPA & \cellcolor{oursrow}\textbf{4.69} & \cellcolor{oursrow}\underline{18.20} & \cellcolor{oursrow}71.1 & \cellcolor{oursrow}84.3 & \cellcolor{oursrow}42.3 & \cellcolor{oursrow}\textbf{72.5} & \cellcolor{oursrow}\textbf{73.7} \\
& RAHF-SCIT & \underline{4.52} & \bf 23.48 & \bf 71.3 & \bf 84.6 & \bf 48.2 & 72.4 & 73.0 \\
& DPO & 4.34 & 16.84 & 71.3 & 84.5 & 41.1 & 71.7 & 72.4 \\
& RepE & 4.51 & 14.34 & 71.2 & 84.2 & 41.0 & 71.6 & 72.1 \\
\bottomrule
\end{tabular}
\end{table*}

\begin{figure}
  \centering
  \includegraphics[width=0.9\linewidth]{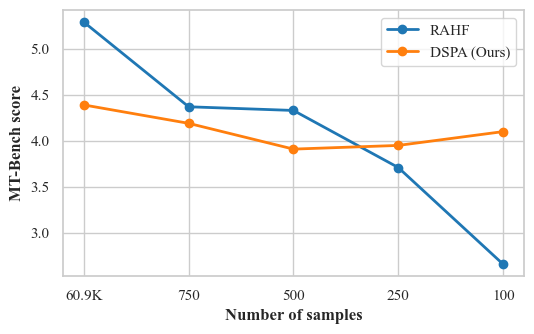}
\caption{MT-Bench under data restriction (Gemma-2-2B): DSPA vs.\ RAHF-SCIT.}
  \label{fig:mt_bench}
\end{figure}

To study data efficiency, we subsample $N{=}250$ preference triples from UltraFeedback and recompute $\mathbf{A}$; for weight-updating methods, we train on the same subset. We compare against DPO and RepE, as well as RAHF-SCIT \citep{liu2024aligning}, a strong but computationally heavy two-stage fine-tuning pipeline. Using standard FLOP accounting \citep{brown2020languagemodelsfewshotlearners, chowdhery2022palmscalinglanguagemodeling}, we estimate that the alignment stage of RAHF-SCIT is approximately $4.47\times$ as compute-intensive as DSPA's $\mathbf{A}$ construction. In practice, on Gemma-2-9B on a single Nvidia H200, RAHF-SCIT takes $11.5\times$ longer wall-clock with peak memory 140.8\,GB versus 33.1\,GB. DPO is also substantially more expensive at full data: our Qwen3-8B run took nearly 12 hours on four L40S GPUs, while DSPA construction completed in roughly one-fifteenth the wall-clock time on one L40S (Appendix~\ref{app:compute_cost}). These comparisons count only additional alignment computation beyond the shared SFT base.

Table~\ref{tab:restricted-data} reports results at $N{=}250$. In this setting, DSPA achieves the highest MT-Bench scores on Gemma-2-9B and Qwen3-8B and is within 0.02 of the best method on Gemma-2-2B. Both DSPA and RAHF-SCIT outperform DPO and RepE under data restriction. Figure~\ref{fig:mt_bench} further sweeps $N$ for Gemma-2-2B: DSPA remains stable even with as few as $N{=}100$ preference triples, while RAHF-SCIT's score declines sharply. This robustness is consistent with the concentration analysis in Appendix~\ref{app:theory}: because the most frequently steered features (Section~\ref{sec:interp}) are broadly reusable discourse signals, their conditional estimates in $\mathbf{A}$ stabilize quickly even with limited data.

\subsection{Ablations}
\label{sec:ablations}

In the preceding experiments, we selected a consistent set of hyperparameters that yields strong results across all base models. In particular, we use $k_{\text{prompt}} = 32, k_{\text{diff}} = 16, \alpha = 0.2$, and we ablate dispreferred SAE features as opposed to augmenting preferred ones. We set $\ell_{\text{input}}=7, \ell_{\text{output}}=24$ for Gemma-2-2B, $\ell_{\text{input}}=9, \ell_{\text{output}}=39$ for Gemma-2-9B, and $\ell_{\text{input}} = 9, \ell_{\text{output}} = 18$ for Qwen3-8B. Moreover, as discussed in Section~\ref{sec:experimental-setup}, we use custom SAEs fine-tuned on HH-RLHF. In this section, we justify our choice of input and output layer, mode (ablate vs. augment), and the use of custom SAEs in a series of ablation experiments run on the two Gemma models.

For ablations, we use MT-Bench judged by GPT-OSS-120B (much cheaper than GPT-4o); these scores are not directly comparable to our main MT-Bench results and are generally lower.
\begin{figure}[htb]
  \centering
  \includegraphics[width=\linewidth]{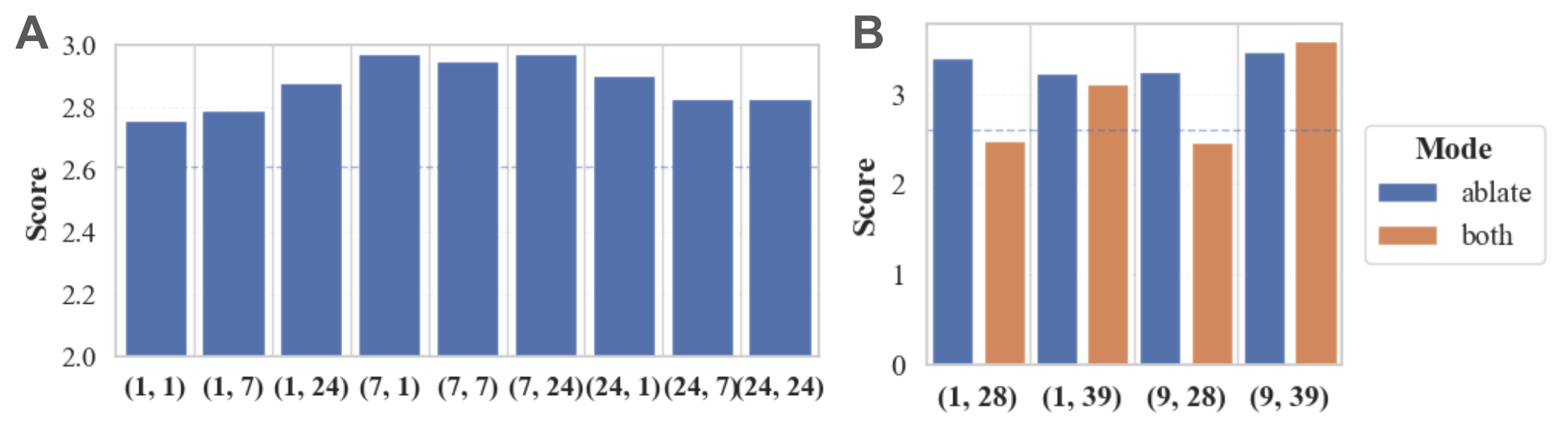}
  \caption{\textbf{A.} Gemma-2-2B score vs.\ $(\ell_{\text{input}}, \ell_{\text{output}})$ (ablate only). \textbf{B.} Gemma-2-9B score vs.\ $(\ell_{\text{input}}, \ell_{\text{output}})$ (ablate only; augment+ablate). Dashed = Base Model.}
  \label{fig:ablations_layer}
\end{figure}

\paragraph{Which layers matter?} We examine the effect of different choices of $\ell_{\text{input}}$ and $\ell_{\text{output}}$ in Figure~\ref{fig:ablations_layer}A for Gemma-2-2B and in Figure~\ref{fig:ablations_layer}B for Gemma-2-9B. We observe that for Gemma-2-2B $\ell_{\text{input}}$ has a stronger impact on score than $\ell_{\text{output}}$, with $\ell_{\text{input}}=7$ outperforming the other input layers surveyed. $\ell_{\text{input}}=7, \ell_{\text{output}}=24$ has marginally better performance than other output layers choices; we also prefer a late output layer because late-layer SAE features are more likely to be interpretable.

For Gemma-2-9B, we observe stronger dependence on both input and output layer, particularly when both augmenting and ablating features. The best score is achieved in both cases for $\ell_{\text{input}}=9, \ell_{\text{output}}=39$. This is consistent with our choice of layers for Gemma-2-2B: in both cases we use an early--mid input layer and a late output layer.

\begin{figure}[htb]
  \centering
  \includegraphics[width=\linewidth]{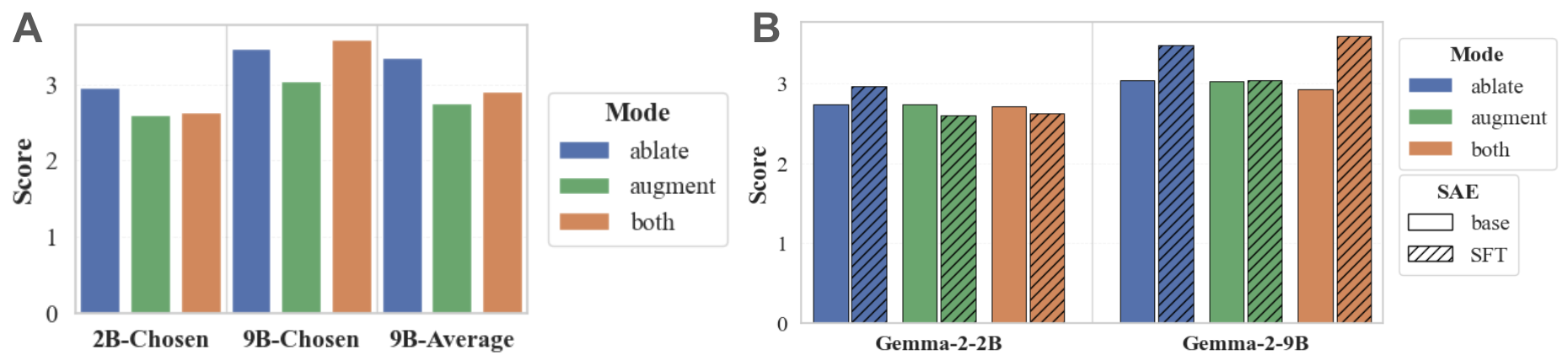}
  \caption{\textbf{A.} Score by steering mode (ablate/augment/both). 2B-Chosen and 9B-Chosen use the best $(\ell_{\text{input}}, \ell_{\text{output}})$ per model; 9B-Average averages over the layer grid in Figure~\ref{fig:ablations_layer}B. \textbf{B.} Base Gemma Scope SAE vs.\ HH-RLHF fine-tuned SAE.}
  \label{fig:ablations_mode_base}
\end{figure}

\paragraph{Ablate, augment, or both?} A key decision point in applying DSPA is whether to augment desirable features, ablate undesirable ones, or do both; Figure~\ref{fig:ablations_mode_base}A summarizes the impact of this choice. We find that for both models only augmenting features leads to subpar results. For Gemma-2-2B, only ablating features is consistently better than either of the other modes. For Gemma-2-9B, under certain layer combinations including our main choice of $\ell_{\text{input}}=9, \ell_{\text{output}}=39$, both augmenting and ablating yields a better score than ablating only, but this is not consistent across layer choices. For consistency, we only ablate features throughout.

\paragraph{Does fine-tuning the SAE help?} In Figure~\ref{fig:ablations_mode_base}B, we assess the impact of steering the base Gemma Scope SAEs \citep{lieberum2024gemmascopeopensparse} versus the custom SAEs fine-tuned on HH-RLHF which we have used throughout. We report results across both models and all three modes, keeping all other parameters constant. The public and custom SAEs tie when averaged across steering modes on Gemma-2-2B (2.73 vs.\ 2.73) and are nearly tied for augment-only Gemma-2-9B (3.03 vs.\ 3.05). Custom SAEs are materially better for Gemma-2-9B under ablate-only (3.48 vs.\ 3.04) and augment+ablate (3.60 vs.\ 2.93). Thus off-the-shelf SAEs can be usable when available, while preference-domain fine-tuning helps substantially in some settings. See Appendix~\ref{app:base_vs_custom} for more details.

\begin{figure}[htb]
  \centering
  \includegraphics[width=\linewidth]{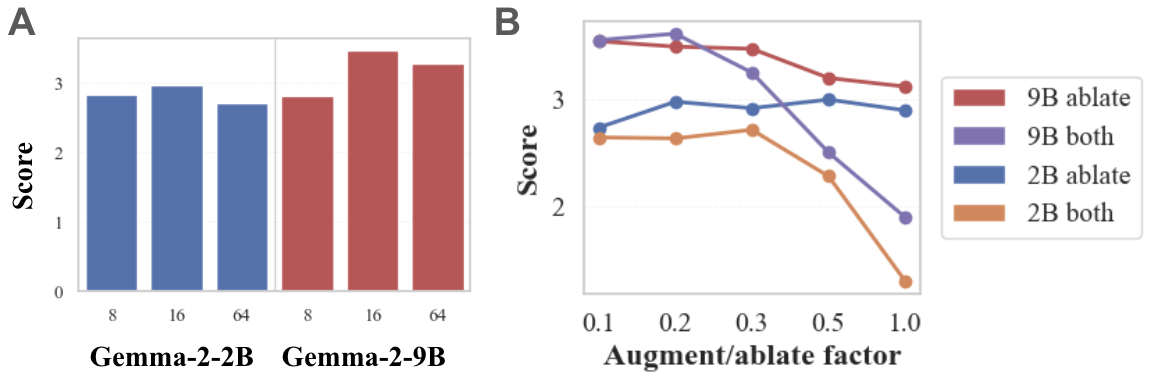}
  \caption{Hyperparameter sensitivity. \textbf{A.} Score vs.\ $k_{\text{diff}}$ (ablate only). \textbf{B.} Score vs.\ $\alpha$ (ablate only; augment+ablate).}
  \label{fig:ablations_kdiff_alpha}
\end{figure}

\paragraph{How many features to steer?} Figure~\ref{fig:ablations_kdiff_alpha}A shows the impact of modifying the parameter $k_{\text{diff}}$, the number of differential features to augment and/or ablate. We find that for both models, among the values surveyed, the best results are obtained for $k_{\text{diff}} = 16$.
\paragraph{How strong should steering be?} In Figure~\ref{fig:ablations_kdiff_alpha}B, we investigate the score obtained for various values of $\alpha$, the factor that controls the steering strength. We observe a sharp decline in performance for high $\alpha$ values when both augmenting and ablating features, whereas there is only a modest decline when ablating only for Gemma-2-9B and little effect at all for Gemma-2-2B. This is related to the fact that we clamp ablated feature values from below by 0, whereas there is no cap for augmented features, allowing them to be overexpressed in the high-$\alpha$ regime. We find that $\alpha = 0.2$ yields the best results for Gemma-2-2B (ablate only) and Gemma-2-9B (augment and ablate) while maintaining high quality for other configurations; this motivates our choice of $\alpha = 0.2$ throughout.

\section{Conclusion}
We introduced DSPA, a prompt-conditional, inference-time preference-alignment method that steers decoding by modifying sparse SAE features via a conditional-difference map from preference data. Across MT-Bench and AlpacaEval on Gemma-2-2B/9B and Qwen3-8B, DSPA is competitive with strong inference-time baselines while preserving multiple-choice accuracy. DSPA remains robust under preference-data restriction and competitive with heavier pipelines at a fraction of the compute. Its sparse intervention traces expose which feature indices are edited at which token positions, and aggregate audits suggest that the preference directions are dominated by discourse and stylistic signals. Future work should test transfer to other safety and long-horizon tasks.

\section*{Acknowledgements}
Aashiq Muhamed acknowledges support from the Amazon AI Ph.D. Fellowship, the Cooperative AI PhD Fellowship, and MATS.

\section*{Limitations}

\paragraph{SAE availability.} DSPA requires high-quality SAEs for both input and output layers, which limits present applicability to model families with suitable SAE infrastructure. Public SAEs remain usable in several tested settings, but preference-domain fine-tuning can substantially improve stronger ablation-based steering on Gemma-2-9B (Section~\ref{sec:ablations}).

\paragraph{Evaluation scope.} Our open-ended generation evaluations rely on LLM-as-a-judge protocols (MT-Bench with GPT-4o, AlpacaEval with Llama-3-70B), which have known biases including preferences for length and style. Judge scores are incomplete proxies for human preference, especially because DSPA often edits discourse features; future work should add controlled human evaluation and length- and style-controlled judge protocols, as well as safety-specific and longer-form benchmarks.

\paragraph{Interpretability pipeline.} Feature interpretations and category assignments are generated by \texttt{gpt-5-mini} without systematic human validation. While the resulting descriptions are plausible and consistent with observed steering behavior, they should be treated as approximate characterizations.

\section*{Ethical Considerations}

\paragraph{Scope and intended use.}
DSPA is an inference-time preference-alignment technique that modifies internal activations using SAE features. It does not provide formal safety guarantees and should not be treated as a substitute for safety training, refusal policies, or content filtering.

\paragraph{Risks and misuse.}
Because DSPA is modular and reversible, it can lower the barrier to experimenting with alignment interventions, but it could also be used to deliberately steer models toward harmful behavior by targeting the same mechanism. We recommend using DSPA only in conjunction with established safety mitigations and validating behavior changes on the intended distribution.

\paragraph{Data and bias.}
DSPA is derived from preference datasets (UltraFeedback, HH-RLHF) and evaluated using LLM-as-a-judge protocols, both of which may encode normative biases about what constitutes a ``good'' response; DSPA may inherit these biases. We do not conduct demographic fairness analyses, and we caution against interpreting benchmark improvements as universally desirable.

\paragraph{Evaluation and interpretability limitations.}
Our open-ended results rely on automated judges and our feature descriptions and category assignments are generated by \texttt{gpt-5-mini}; these signals should not be interpreted as ground truth. We do not evaluate safety-specific benchmarks, so reported gains are benchmark- and judge-dependent.

\paragraph{Use of AI assistants.}
We used \texttt{gpt-5-mini} for feature interpretation and categorization (Section~\ref{sec:interp}, Appendix~\ref{app:interp-prompts}); GPT-4o and GPT-OSS-120B served as MT-Bench judges.

\paragraph{Reproducibility.}
Code and reproduction instructions, including instructions for reproducing the fine-tuned Gemma and Qwen models used in the paper, are available at \url{https://github.com/jtbwedgwood/dspa}. The fine-tuned SAEs are available on Huggingface at links provided in Appendix~\ref{app:base_vs_custom}.

\bibliography{custom}

\appendix

\section{Proofs and Additional Theory}
\label{app:theory}

\subsection{Full Assumptions}

\begin{assumption}[Shared-covariance mean shift, full statement]
\label{ass:lda_full}
There exist a positive semidefinite matrix $\Sigma\in\mathbb R^{d_{\text{SAE}}\times d_{\text{SAE}}}$, a scalar $c>0$, and a prompt-dependent vector $\boldsymbol\beta(x)\in\mathbb R^{d_{\text{SAE}}}$ such that, conditional on $x$,
\begin{align*}
\tilde{\boldsymbol\rho}(y^+)\sim\big(\mu(x)+\tfrac{c}{2}\Sigma\boldsymbol\beta(x),\,\Sigma\big),
\\
\tilde{\boldsymbol\rho}(y^-)\sim\big(\mu(x)-\tfrac{c}{2}\Sigma\boldsymbol\beta(x),\,\Sigma\big),
\end{align*}
for some prompt-dependent mean $\mu(x)$. Equivalently, $\mathbb E[\Delta\tilde{\boldsymbol\rho}\mid x]=c\,\Sigma\boldsymbol\beta(x)$.
\end{assumption}

\begin{assumption}[Additive gating, full statement]
\label{ass:gating_full}
There exists a matrix $B\in\mathbb R^{d_{\text{SAE}}\times d_{\text{SAE}}}$ such that $\boldsymbol\beta(x)=B\,g(x)$. Let $M:=\mathbb E[g(x)\,g(x)^\top]\in\mathbb R^{d_{\text{SAE}}\times d_{\text{SAE}}}$ denote the gate Gram matrix. Write the $i$th column of $B$ as $\boldsymbol\beta^{(i)}$.
\end{assumption}

\subsection{Proof of \cref{thm:identify_exact}}

\begin{proof}
By iterated expectation and \cref{ass:lda},
\begin{align*}
    \mathbf A^\top
= \mathbb E\!\left[\Delta\tilde{\boldsymbol\rho}\,g(x)^\top\right]
= \mathbb E\!\left[\mathbb E[\Delta\tilde{\boldsymbol\rho}\mid x]\,g(x)^\top\right]
= \\ c\,\Sigma\,\mathbb E\!\left[\boldsymbol\beta(x)\,g(x)^\top\right].
\end{align*}
Using \cref{ass:gating}, $\boldsymbol\beta(x)=B g(x)$, so
$\mathbb E[\boldsymbol\beta(x)\,g(x)^\top]=\mathbb E[B g(x)g(x)^\top]=B\,\mathbb E[g(x)g(x)^\top]=B M$,
giving $\mathbf A^\top = c\,\Sigma\,B\,M$. Multiplying by $\Sigma^{-1}$ on the left and $M^{-1}$ on the right yields $\Sigma^{-1}\mathbf A^\top M^{-1} \propto B$.
\end{proof}

\paragraph{Interpretation.}
Equation $\mathbf A^\top = c\,\Sigma\,B\,M$ shows that rows of $\mathbf A$ generally mix multiple templates when gates co-activate: $\mathbf A_{i,:}$ is proportional to $\Sigma\sum_{i'}M_{ii'}\boldsymbol\beta^{(i')}$, not $\Sigma\boldsymbol\beta^{(i)}$ unless $M$ is approximately diagonal.

\subsection{Proof of \cref{cor:weak_coact}}

\begin{proof}
By \cref{ass:lda,ass:gating}, conditioning on $g_i=1$:
\begin{align*}
    \mathbb E[\Delta\tilde{\boldsymbol\rho}\mid g_i=1]
= c\,\Sigma\,\mathbb E[\boldsymbol\beta(x)\mid g_i=1]
= \\ c\,\Sigma\,\mathbb E[B g(x)\mid g_i=1]
= \\ c\,\Sigma\left(\boldsymbol\beta^{(i)}+\sum_{i'\neq i}\pi_{i'\mid i}\boldsymbol\beta^{(i')}\right).
\end{align*}
The norm bound follows from triangle inequality and submultiplicativity:
\[
\|\Sigma \sum_{i'\neq i}\pi_{i'\mid i}\boldsymbol\beta^{(i')}\|_2\le \|\Sigma\|_{2\to 2}\sum_{i'\neq i}\pi_{i'\mid i}\|\boldsymbol\beta^{(i')}\|_2.
\]
If additionally $\|\boldsymbol\beta^{(i)}\|_2>0$ and $\sum_{i'\neq i}\pi_{i'\mid i}\|\boldsymbol\beta^{(i')}\|_2\le \epsilon\|\boldsymbol\beta^{(i)}\|_2$, the relative error is bounded by $\epsilon \cdot \|\Sigma\|_{2\to 2}\|\boldsymbol\beta^{(i)}\|_2 / \|\Sigma\boldsymbol\beta^{(i)}\|_2$.
\end{proof}

\subsection{DSPA Scoring and Optional De-mixing}

At inference time, DSPA constructs an active set $S_{\text{prompt}}$ (e.g., the top $k_{\text{prompt}}$ indices by $\rho_i(x)$) and uses the truncated indicator $\hat g(x):=\mathbf 1_{S_{\text{prompt}}}\in\{0,1\}^{d_{\text{SAE}}}$. The score is
\[
\mathbf s(x)\;:=\;\mathbf A^\top \hat g(x) = c\,\Sigma\,B\,M\,\hat g(x),
\]
where the ideal unmixed preference direction would be $c\,\Sigma\,B\,g(x)$. DSPA makes two approximations: (i) replacing $g(x)$ by a truncated indicator $\hat g(x)$, and (ii) ignoring the off-diagonal co-activation structure encoded in $M$. When $M\approx \mathrm{diag}(p)$ (weak gate dependence), $\mathbf s(x)\approx c\,\Sigma\,B\,\mathrm{diag}(p)\,\hat g(x)$.

\paragraph{Optional de-mixing on the active subspace.}
Although $M$ is too large to invert globally, DSPA only uses a small active set. Let $S:=S_{\text{prompt}}$ and $M_S:=\mathbb E[g_S g_S^\top]\in\mathbb R^{|S|\times |S|}$. If $M_S$ is well-conditioned, a de-mixed score is
\[
\mathbf s_{\mathrm{db}}(x)\;:=\;\mathbf A_{S,:}^\top\,M_S^{-1}\,g_S(x)
\;\approx\; c\,\Sigma\,B_S\,g_S(x).
\]

\subsection{Empirical Diagnostics of the Theory}

We evaluate the de-mixing rule above, diagonal whitening motivated by the covariance term in \cref{ass:lda}, and their combination using the same low-cost MT-Bench proxy as the ablations in Section~\ref{sec:ablations}. Table~\ref{tab:theory_diagnostics} reports the results. None improves on the submitted DSPA score, although most remain above the corresponding Base Model. The corrections therefore provide useful diagnostics but are not empirically preferable to the simpler conditional sparse-feature rule in this setting.

\begin{table*}[t]
\centering
\small
\caption{Theory-motivated variants on the low-cost MT-Bench proxy (higher is better).}
\label{tab:theory_diagnostics}
\begin{tabular}{@{}lccccc@{}}
\toprule
Model & Base & DSPA & De-mixing & Diagonal whitening & De-mixing + whitening \\
\midrule
Gemma-2-2B & 2.61 & \textbf{2.97} & 2.85 & 2.86 & 2.86 \\
Gemma-2-9B & 3.26 & \textbf{3.48} & 3.39 & 3.17 & 3.46 \\
\bottomrule
\end{tabular}
\end{table*}

We also check the two main simplifying assumptions. For the shared-covariance mean-shift approximation in \cref{ass:lda}, the chosen-to-rejected covariance trace ratio is 0.907 and the median ratio over sampled covariance elements is 0.924, supporting similar covariance scale. A corresponding additive-gating diagnostic provides weaker support for \cref{ass:gating}. This partial mismatch is consistent with the corrected variants failing to improve: de-mixing and whitening depend more directly on the idealized assumptions, whereas the submitted top-$k$ heuristic appears more robust to their violation. Accordingly, the theory should be read as a design rationale and diagnostic abstraction, not as a claim that real SAE activations exactly satisfy the model.

\subsection{Optimality of Top-$k$ Ablation}

\begin{assumption}[Small ablation model in density space]
\label{ass:ablation_density}
For a fixed prompt $x$, ablating a set $S\subset[d_{\text{SAE}}]$ of size $k$ produces an expected change $\Delta\tilde{\boldsymbol\rho}(y)\approx -\delta\,\mathbf 1_S$ for some fixed $\delta>0$.
\end{assumption}

\begin{theorem}[Top-$k$ ablation is optimal for linear utility]
\label{thm:topk}
Under \cref{ass:ablation_density} with linear utility $U(x,y)=\boldsymbol\beta(x)^\top \tilde{\boldsymbol\rho}(y)$, the expected utility improvement among all sets $S$ with $|S|=k$ is maximized by choosing the $k$ smallest coordinates of $\boldsymbol\beta(x)$.
\end{theorem}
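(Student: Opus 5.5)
The argument rests on linearity of the utility in the density vector, which collapses the set optimization to a sum of coordinates. Fix the prompt $x$ and write $\boldsymbol\beta=\boldsymbol\beta(x)$. Let $\tilde{\boldsymbol\rho}_0(y)$ be the unablated response density and $\tilde{\boldsymbol\rho}_S(y)$ the density after ablating $S$. By \cref{ass:ablation_density}, taken as holding in expectation,
\[
\mathbb E[\tilde{\boldsymbol\rho}_S(y)-\tilde{\boldsymbol\rho}_0(y)]=-\delta\,\mathbf 1_S .
\]

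First I will compute the expected utility improvement. Linearity of expectation and of $U$ in $\tilde{\boldsymbol\rho}$ give
\[
\Delta U(S):=\mathbb E[U(x,y_S)]-\mathbb E[U(x,y_0)]=\boldsymbol\beta^\top\mathbb E[\tilde{\boldsymbol\rho}_S-\tilde{\boldsymbol\rho}_0]=-\delta\sum_{j\in S}\beta_j .
\]
The baseline term $\mathbb E[U(x,y_0)]$ does not depend on $S$, so only this difference matters.

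Second, I will solve the combinatorial problem. Since $\delta>0$, maximizing $\Delta U(S)$ over $|S|=k$ is the same as minimizing $\sum_{j\in S}\beta_j$. An exchange argument settles this. Take any feasible $S$ and let $S^\star$ index the $k$ smallest coordinates, with ties broken arbitrarily. If $S\neq S^\star$, pick $j\in S\setminus S^\star$ and $j'\in S^\star\setminus S$. By construction $\beta_{j'}\le\beta_j$, so replacing $j$ by $j'$ does not increase the sum. Iterating at most $k$ times reaches $S^\star$, which shows $S^\star$ is a minimizer. Equivalently, $\sum_{j\in S}\beta_j\ge\sum_{r=1}^k\beta_{(r)}$, where $\beta_{(r)}$ denotes the order statistics. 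If the values are not distinct, the optimal set is unique only up to ties.

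Third, I will connect the result back to DSPA. The paper uses the score $\mathbf s(x)$ as a proxy for the direction $\Sigma\boldsymbol\beta(x)$ (\cref{thm:identify_exact}), so selecting $\text{bottom-}k_{\text{diff}}\{\mathbf s_j(x)\}$ corresponds to this rule. This step is a remark rather than part of the proof, because the theorem is stated in terms of $\boldsymbol\beta$.

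The main obstacle is interpretive rather than technical. \cref{ass:ablation_density} is stated with ``$\approx$'', so the theorem holds exactly only for the idealized model in which the expected shift equals $-\delta\mathbf 1_S$. I will state explicitly that this equality is what I assume. I will also note that the same uniform $\delta$ for every feature is essential: if the shifts were heterogeneous, $\delta_j$, the rule would instead select the $k$ smallest values of $\delta_j\beta_j$.
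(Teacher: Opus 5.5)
Your proposal is correct and follows the paper's proof. Both use linearity to reduce the expected utility change to $-\delta\sum_{j\in S}\beta_j(x)$ and then minimize $\sum_{j\in S}\beta_j(x)$ over $|S|=k$; your exchange argument, tie-handling, and explicit treatment of the ``$\approx$'' in the ablation assumption as an equality only make rigorous what the paper's one-line proof leaves implicit.
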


\begin{proof}
Under \cref{ass:ablation_density}, $\tilde{\boldsymbol\rho}$ shifts by $-\delta\mathbf 1_S$, so $\mathbb E[U(x,y_{\text{edited}})-U(x,y)]=-\delta\sum_{j\in S}\beta_j(x)$. With fixed $k$, this is maximized by selecting the $k$ smallest $\beta_j(x)$.
\end{proof}

\paragraph{Relating $\mathbf s(x)$ to $\boldsymbol\beta(x)$.}
Under \cref{ass:lda}, the mean difference direction is $c\,\Sigma\boldsymbol\beta(x)$, so scoring coordinates by $\mathbf s(x)$ targets (a mixed approximation of) $\Sigma\boldsymbol\beta(x)$. If $\Sigma$ is diagonal with nearly constant entries, rankings are preserved; otherwise, whitening by $\Sigma^{-1}$ before selection may improve results.

\subsection{Finite-Sample Concentration}

Because $\Delta\tilde{\rho}_j\in[-1,1]$ and $g_i\in\{0,1\}$, each summand in $\hat A_{i,j}$ is bounded. Let $N_i:=\sum_{k=1}^N g_i(x_k)$ be the number of training prompts with $g_i(x)=1$.

\begin{lemma}[Row-wise concentration]
\label{lem:concentration}
Conditional on $N_i$, for any $\epsilon>0$,
\[
\Pr\big(|\hat A_{i,j}-A_{i,j}|\ge \epsilon\mid N_i\big)\le 2\exp\!\left(-\tfrac{1}{2}N_i\epsilon^2\right).
\]
By a union bound, with probability at least $1-\delta$,
\[
\max_{j\in[d_{\text{SAE}}]}|\hat A_{i,j}-A_{i,j}|
\le \sqrt{\tfrac{2\log(2d_{\text{SAE}}/\delta)}{N_i}}.
\]
\end{lemma}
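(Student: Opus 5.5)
The plan is to read $\hat A_{i,j}$ in this lemma as the row-normalized conditional estimator $\hat A_{i,j}=\frac{1}{N_i}\sum_{k:\,g_i(x_k)=1}\Delta\tilde\rho_{j,k}$, with target $A_{i,j}=\mathbb E[\Delta\tilde\rho_j\mid g_i(x)=1]$. This matches the ``stable conditional estimates'' language of Section~\ref{sec:theory}. The unnormalized $\frac1N\sum_k g_i(x_k)\Delta\tilde\rho_{j,k}$ differs from it only by the factor $N_i/N$, which is fixed once we condition on $N_i$. I will take the triples $(x_k,y_k^+,y_k^-)$ to be i.i.d. I will also take the thresholds $\tau_i$ to be fixed, as in the Setup paragraph (``$\tau_i$ is a fixed threshold''), so that each $g_i(x_k)$ is a deterministic function of $x_k$ alone.

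\textbf{Step 1 (reduction to an i.i.d.\ sample).} Condition on the gate indicators $(g_i(x_1),\dots,g_i(x_N))$, and hence on $N_i$. Since the triples are independent, the variables $\{\Delta\tilde\rho_{j,k}: g_i(x_k)=1\}$ are then independent. Each has the law of $\Delta\tilde\rho_j$ conditioned on $g_i(x)=1$, so each has mean $A_{i,j}$. As noted just before the statement, they take values in $[-1,1]$, because each density lies in $[0,1]$.

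\textbf{Step 2 (Hoeffding).} Apply Hoeffding's inequality to the average of $N_i$ independent variables whose range has width $b-a=2$:
\[
\Pr\big(|\hat A_{i,j}-A_{i,j}|\ge\epsilon\mid N_i\big)\le 2\exp\!\Big(-\tfrac{2N_i^2\epsilon^2}{N_i\cdot 4}\Big)=2\exp\!\big(-\tfrac12 N_i\epsilon^2\big).
\]
This holds conditionally on the full gate pattern. The bound depends on that pattern only through $N_i$, so it also holds conditionally on $N_i$, by averaging over patterns with the same $N_i$.

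\textbf{Step 3 (union bound).} Sum the Step~2 bound over the $d_{\text{SAE}}$ output coordinates $j$. This gives $\Pr(\max_j|\hat A_{i,j}-A_{i,j}|\ge\epsilon\mid N_i)\le 2d_{\text{SAE}}\exp(-N_i\epsilon^2/2)$. Setting the right-hand side equal to $\delta$ and solving for $\epsilon$ yields $\epsilon=\sqrt{2\log(2d_{\text{SAE}}/\delta)/N_i}$, which is the stated bound.

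The main obstacle is the independence claim in Step~1. In the actual procedure, $\tau_i$ is the empirical $p$th percentile over the same training prompts. This couples the gates across samples, so conditioning on $N_i$ would no longer leave an i.i.d.\ conditional sample. I would handle this by stating the lemma for fixed (population) thresholds, as the Setup does. As a fallback, one could compute $\tau_i$ on a held-out split, which restores independence.
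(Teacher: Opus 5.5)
Your argument is correct and is the one the paper intends. The paper gives no written proof beyond noting that each summand lies in $[-1,1]$, so its content is exactly your Hoeffding bound over the $N_i$ gated triples followed by a union bound over the $d_{\text{SAE}}$ output columns. Your conditional, row-normalized reading of $\hat A_{i,j}$ with fixed $\tau_i$ is also necessary, not merely convenient: for the literal $\frac{1}{N}$-normalized map with target $\mathbb{E}[g_i(x)\Delta\tilde\rho_j]$, conditioning on $N_i$ leaves an uncontrolled bias $\mathbb{E}[\Delta\tilde\rho_j\mid g_i(x)=1]\,(N_i/N-\Pr(g_i(x)=1))$, so your aside about the factor $N_i/N$ transfers the bound only to the centered target $\frac{N_i}{N}\mathbb{E}[\Delta\tilde\rho_j\mid g_i(x)=1]$, not to the population $A_{i,j}$.
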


\paragraph{Implication.}
Rare gates (small $N_i$) yield noisy conditional estimates, motivating (i) percentile thresholding to control gate frequency and (ii) conservative sparsification of $\mathbf A$ to remove entries dominated by estimation noise.

\section{Fine-Tuned SAEs}
\label{app:base_vs_custom}

\subsection{Fine-Tuning Details and SAE Availability}

For Gemma-2, we initialize from the canonical width-16k JumpReLU residual-stream SAEs released with Gemma Scope \citep{lieberum2024gemmascopeopensparse,rajamanoharan2024jumping}. For Qwen3-8B, we initialize from trainer~0 of the width-16k BatchTopK residual-stream SAEs released by Karvonen \citep{karvonen2025qwen3saes}. We fine-tune each SAE on activations from both the chosen and rejected conversations in HH-RLHF \citep{bai2022traininghelpfulharmlessassistant}, using two passes over packed 512-token sequences and a learning rate of $10^{-5}$ while retaining the source SAE's architecture and latent dimensionality. During Qwen3-8B fine-tuning, we follow Karvonen's recommended preprocessing and discard activation vectors whose norms exceed ten times the median norm within the activation buffer. We train separate SAEs at the input and output intervention layers used for each model; all
checkpoints are available in Table~\ref{tab:finetuned_saes}.

\begin{table}[t]
  \centering
  \small
  \begin{tabular}{llcl}
      \toprule
      Model & Role & Layer & Checkpoint \\
      \midrule
      Gemma-2-2B & Input  & 7  & \href{https://huggingface.co/james-wedgwood/gemma-2-2b-gemma-scope-finetuned-hhrlhf-layer7}
      {Hugging Face} \\
      Gemma-2-2B & Output & 24 & \href{https://huggingface.co/james-wedgwood/gemma-2-2b-gemma-scope-finetuned-hhrlhf-layer24}
      {Hugging Face} \\
      Gemma-2-9B & Input  & 9  & \href{https://huggingface.co/james-wedgwood/gemma-2-9b-gemma-scope-finetuned-hhrlhf-layer9}
      {Hugging Face} \\
      Gemma-2-9B & Output & 39 & \href{https://huggingface.co/james-wedgwood/gemma-2-9b-gemma-scope-finetuned-hhrlhf-layer39}
      {Hugging Face} \\
      Qwen3-8B   & Input  & 9  & \href{https://huggingface.co/james-wedgwood/qwen3-8b-adam-karvonen-finetuned-hhrlhf-layer9}
      {Hugging Face} \\
      Qwen3-8B   & Output & 18 & \href{https://huggingface.co/james-wedgwood/qwen3-8b-adam-karvonen-finetuned-hhrlhf-layer18}
      {Hugging Face} \\
      \bottomrule
  \end{tabular}
  \caption{Fine-tuned SAE checkpoints used in our experiments.}
  \label{tab:finetuned_saes}
\end{table}

\subsection{Analysis: Base vs. Fine-Tuned SAE}

Let $\mathbf{A}^{\text{base}}$ denote the conditional-difference map (Section~\ref{sec:identifying}) computed using the base Gemma Scope SAEs, and $\mathbf{A}^{\text{custom}}$ the same map computed using the custom fine-tuned SAEs, in both cases using Gemma-2-9B with $\ell_{\text{input}}=9, \ell_{\text{output}} = 39$. As an approximation for the features most likely to be augmented or ablated under DSPA, we consider the top $k_{\text{diff}} = 16$ output feature indices $j$ for a given input feature $i$, sorting by $\mathbf{A}_{i,j}$ descending for augment features or ascending for ablate features. Denote by $\mathbf{A}_{i,(k)}$ the $k$th largest and by $\mathbf{A}_{i,(-k)}$ the $k$th smallest element of the row $\mathbf{A}_{i}$; taking the union overall all $i$ yields 
\begin{align*}
    S^{\text{base}}_{\text{augment}} = \bigcup_{i\in[d_{\text{SAE}}]} \{ j : \mathbf{A}^{\text{base}}_{i,j} \geq \mathbf{A}^{\text{base}}_{i,(16)} \}, \\ \quad S^{\text{base}}_{\text{ablate}} = \bigcup_{i\in[d_{\text{SAE}}]} \{ j : \mathbf{A}^{\text{base}}_{i,j} \leq \mathbf{A}^{\text{base}}_{i,(-16)} \}
\end{align*}
and likewise for $S^{\text{custom}}_{\text{augment}}$ and $S^{\text{custom}}_{\text{ablate}}$. We find that $|S^{\text{base}}_{\text{augment}}| = 183, |S^{\text{base}}_{\text{ablate}}| = 186, |S^{\text{custom}}_{\text{augment}}| = 76, |S^{\text{custom}}_{\text{ablate}}| = 83$. Intuitively, the set of features most likely to be modified for the fine-tuned SAE is several times smaller than that for the base SAE, indicating a stronger concentration of features along generalized preference axes in the fine-tuned case.

Interestingly, we find that many of the indices contained in these sets overlap between the base and fine-tuned SAE; concretely, $|S^{\text{base}}_{\text{augment}} \cap S^{\text{custom}}_{\text{augment}}| = 40, |S^{\text{base}}_{\text{ablate}} \cap S^{\text{custom}}_{\text{ablate}}| = 49$. Despite this, the generated interpretations do not always match; for example, feature 10776 is interpreted as \textit{Questions or statements involving illegal or clearly harmful activities} for the fine-tuned SAE, and as \textit{Hedging or clarification language expressing uncertainty or confusion} for the base SAE.

This small number of candidate features for the fine-tuned SAE is the grounds for our assertion in Section~\ref{sec:identifying} that the matrix $\mathbf{A}^{\text{custom}}$ is sparse in practice. Let $\tau = \min\{ |\mathbf{A}^{\text{custom}}_{i,j}| : \mathbf{A}^{\text{custom}}_{i,j} \geq \mathbf{A}^{\text{custom}}_{i,(16)} \text{ or } \mathbf{A}^{\text{custom}}_{i,j} \leq \mathbf{A}^{\text{custom}}_{i,(-16)} \}$. If we set entries of $\mathbf{A}^{\text{custom}}$ to 0 as long as $|\mathbf{A}^{\text{custom}}_{i,j}|<\tau$, then the number of nonzero entries is approximately $0.002d_{\text{SAE}}^2$ and $\mathbf{A}^{\text{custom}}_{i,j}=0$ for all $j\notin S^{\text{custom}}_{\text{augment}} \cup S^{\text{custom}}_{\text{ablate}}$. Generating responses to all first-turn MT-Bench questions and both ablating and augmenting features, we discover that the ablated features are a strict subset of $S^{\text{custom}}_{\text{ablate}}$ and the augmented features are a strict subset of $S^{\text{custom}}_{\text{augment}}$, validating this approximation.

\section{Robustness to Preference Source}
\label{app:preference_sources}

To test whether DSPA is tied to UltraFeedback's prompt distribution, we reconstruct $\mathbf{A}$ from 1,000 filtered PRISM preference pairs \citep{kirk2024prism} and from 1,000 filtered LMSYS Chatbot Arena preference pairs \citep{chiang2024chatbotarena}. We retain the Gemma hyperparameters and the low-cost MT-Bench proxy used in Section~\ref{sec:ablations}. Table~\ref{tab:preference_sources} shows scores comparable to UltraFeedback-DSPA and above the Base Model for both alternative sources and both model sizes. This limited proxy experiment indicates that the mechanism is not unique to UltraFeedback; it does not establish unrestricted transfer across preference distributions.

\begin{table}[t]
\centering
\small
\setlength{\tabcolsep}{3pt}
\caption{DSPA maps from different preference sources ($N{=}1{,}000$; low-cost MT-Bench proxy; higher is better).}
\label{tab:preference_sources}
\begin{tabular}{@{}lcccc@{}}
\toprule
Model & Base & UltraFeedback & PRISM & LMSYS \\
\midrule
Gemma-2-2B & 2.61 & \textbf{2.97} & 2.90 & 2.85 \\
Gemma-2-9B & 3.26 & 3.48 & \textbf{3.55} & 3.43 \\
\bottomrule
\end{tabular}
\end{table}

\section{Training Compute Cost Comparison}
\label{app:compute_cost}

We compare the cost of DSPA matrix construction against the two-stage RAHF-SCIT pipeline \citep{liu2024aligning} under the assumptions used in our restricted-data analysis. Following standard dense-transformer FLOP accounting \citep{brown2020languagemodelsfewshotlearners,chowdhery2022palmscalinglanguagemodeling}, we approximate one forward token by $2P$ FLOPs and one training token (forward + backward) by $6P$ FLOPs for a dense $P$-parameter model, ignoring the smaller attention correction and implementation overhead. We take $P=8\times 10^9$ (an 8B-class model), and for DSPA use conservative estimates prompt/chosen/rejected lengths, setting $p=c=r=1000$.

For DSPA, each preference triple requires three forward-only passes: one over the prompt, one over prompt+chosen, and one over prompt+rejected. Hence
\[
F_{\text{DSPA}}(N)=2P\,N(3p+c+r).
\]
Under our length estimates, this becomes
\[
F_{\text{DSPA}}(N)=8\times 10^{13}N.
\]

For RAHF step 1, the analyzed configuration consumes $4N$ effective training examples (two epochs over duplicated chosen/rejected variants), each with two gradient-tracked passes at effective length $L_1=768$, following the hyperparameter configuration in \citep{liu2024aligning}. This gives
\[
F_{\text{step1}}(N)=48PNL_1=2.95\times 10^{14}N.
\]
For RAHF step 2, each example performs three no-grad forward passes and one gradient-tracked pass, for an effective cost of roughly $12P$ FLOPs/token. With effective optimizer-step batch $B=64$, sequence length $L_2=512$, and $\texttt{max\_steps}=0.02N$,
\[
F_{\text{step2}}(N)=12P(BL_2)(0.02N)=6.29\times 10^{13}N.
\]
Therefore
\begin{align*}
    F_{\text{RAHF}}(N)=F_{\text{step1}}(N)+F_{\text{step2}}(N)= \\3.58\times 10^{14}N,
\end{align*}

so the modeled compute ratio is
\[
\frac{F_{\text{RAHF}}(N)}{F_{\text{DSPA}}(N)}\approx 4.47.
\]

These are model-FLOP estimates rather than measured hardware utilization, so the wall-clock ratio can be larger. In our Gemma-2-9B runs on a single Nvidia H200, DSPA matrix construction took 46 minutes with peak memory 33.1 GB, whereas the full RAHF pipeline took 8 hours 50 minutes with peak memory 140.8 GB. Thus, the observed wall-clock gap was $11.5\times$, substantially larger than the $4.47\times$ modeled FLOP ratio. This extra gap is consistent with implementation overheads that are not captured by the simple FLOP model, including fixed-length padding, gradient checkpointing, and the more complex multi-pass training loop used by RAHF.

\paragraph{DPO comparison.}
Under equal-length assumptions, the same simplified accounting estimates that DPO's alignment stage uses approximately $3.2\times$ the model FLOPs of DSPA construction, before optimizer-state, checkpointing, and memory overhead. Our full-data Qwen3-8B DPO run took nearly 12 hours on four L40S GPUs, whereas DSPA construction completed in roughly one-fifteenth the wall-clock time on one L40S. Hardware utilization and implementations differ, so this is not a controlled throughput benchmark; it illustrates the substantially higher alignment-stage resource cost of the weight-updating baseline used here.

\section{Experimental Setup}
\label{app:experiment}

\subsection{Evaluations}

We obtain MT-Bench results using FastChat (\url{https://github.com/lm-sys/FastChat}), AlpacaEval results using the official repository (\url{https://github.com/tatsu-lab/alpaca_eval}), and MCQ benchmark results using the Language Model Evaluation Harness (\url{https://github.com/EleutherAI/lm-evaluation-harness}). Custom code is used for response and loglikelihood generation for each of these, while the provided code is used for evaluations. We use the common leaderboard settings and metrics for all MCQ benchmarks, as shown in Table~\ref{tab:mcq}.

\begin{table}[H]
\caption{Multiple-choice benchmark settings.}
\label{tab:mcq}
\small
\centering
\begin{tabular}{@{}lcc@{}}
\toprule
\bf Benchmark & \bf Few-shot & \bf Metric \\
\midrule
MMLU & 5 & acc \\
Arc-Easy & 25 & acc\_norm \\
TruthfulQA & 0 & mc2 \\
HellaSwag & 10 & acc\_norm \\
Winogrande & 5 & acc \\
\bottomrule
\end{tabular}
\end{table}

\subsection{Baselines}
\label{app:baselines}

Table~\ref{tab:baselines} lists the hyperparameter configurations for our baselines. For single-layer interventions, we use the same output layer as with DSPA, namely $\ell_{\text{output}} = 24$ for Gemma-2-2B and $\ell_{\text{output}} = 39$ for Gemma-2-9B. For step 2 of RAHF, we run LoRA on layers $\{8, 10, 12, 14\}$ for Gemma-2-2B and $\{ 10, 12, 14, 16, 18 \}$ for Gemma-2-9B, attempting to choose layers analogous to those used in the original paper for the given models. When performing our data ablation experiments, we scale down the max steps parameter on step 2 proportionately to avoid overfitting. Open-ended results are averaged over three generation seeds, and we did not conduct an extensive per-model hyperparameter sweep for training baselines (especially DPO); stronger baseline performance may be achievable with additional tuning.

\begin{table}[H]
\caption{Baseline hyperparameters.}
\label{tab:baselines}
\small
\centering
\begin{tabular}{@{}llc@{}}
\toprule
\bf Baseline & \bf Hyperparameter & \bf Value \\
\midrule
DPO & Epochs & 1 \\
DPO & Learning rate & $5 \times 10^{-6}$ \\
DPO & Beta & 0.1 \\
DPO & Optimizer & AdamW \\
\midrule
RepE & Layer & $\ell_{\text{output}}$ \\
RepE & Directions & 16 \\
RepE & Scale & 0.5 \\
\midrule
Static-SAE & Layer & $\ell_{\text{output}}$ \\
Static-SAE & Scale & 0.5 \\
\midrule
RAHF & Mode & SCIT \\
RAHF (step 1) & Epochs & 64 \\
RAHF (step 1) & Learning rate & $2 \times 10^{-5}$ \\
RAHF (step 1) & Warmup ratio & 0.1 \\
RAHF (step 2) & Learning rate & $3 \times 10^{-4}$ \\
RAHF (step 2) & Max steps & 450 \\
RAHF (step 2) & LoRA rank & 8 \\
RAHF (step 2) & LoRA alpha & 16 \\
RAHF (step 2) & LoRA dropout & 0.05 \\
RAHF (step 2) & Alpha & 5 \\
\bottomrule
\end{tabular}
\end{table}

\section{MT-Bench Category Breakdown}
\label{app:mt_bench}

For more detailed inspection of the strengths and weakness of DSPA relative to our baselines, we provide a breakdown of MT-Bench scores by category across the surveyed methods, shown in Table~\ref{tab:mt_bench_breakdown}. We find that DSPA is competitive across categories. Likely due to adjusting style and tone features, it scores the best on the Roleplay category for both models, while maintaining performance on more objective tasks such as Math and Coding. Indeed, it also performs the best of any method studied on Math for both models, although scores are low across the board there for the small models in question.

\begin{table*}[t]
\caption{MT-Bench scores by category.}
\label{tab:mt_bench_breakdown}
\centering
{\fontsize{8}{9}\selectfont
\setlength{\tabcolsep}{4pt}
\begin{tabular}{@{}cl cccccccc@{}}
\toprule
& & \bf Coding & \bf Extraction & \bf Humanities & \bf Math & \bf Reasoning & \bf Roleplay & \bf STEM & \bf Writing \\
\midrule
\multirow{6}{*}{\rotatebox{90}{\footnotesize Gemma-2-2B}}
& \cellcolor{oursrow} DSPA (ours)   & \cellcolor{oursrow}2.20 & \cellcolor{oursrow}3.20 & \cellcolor{oursrow}5.92 & \cellcolor{oursrow}\bf 2.40 & \cellcolor{oursrow}3.00 & \cellcolor{oursrow}\bf 6.60 & \cellcolor{oursrow}5.72 & \cellcolor{oursrow}\bf 6.05 \\
& DPO           & \bf 3.10 & 4.38 & \bf 6.67 & 2.00 & \bf 3.02 & 4.95 & 5.65 & 5.09 \\
& RepE          & 1.93 & \bf 4.97 & 5.90 & 2.30 & 2.95 & 5.25 & \bf 5.96 & 5.26 \\
& Static-SAE    & 1.48 & 4.00 & 4.60 & 2.15 & 2.10 & 4.58 & 4.49 & 4.50 \\
& Prompt Eng    & 2.05 & 3.20 & 5.90 & \bf 2.40 & 2.15 & 5.03 & 5.72 & 4.05 \\
& Base Model    & 2.30 & 3.80 & 5.90 & 2.20 & 2.90 & 4.05 & 5.92 & 4.17 \\
\midrule
\multirow{6}{*}{\rotatebox{90}{\footnotesize Gemma-2-9B}}
& \cellcolor{oursrow} DSPA (ours)   & \cellcolor{oursrow}\bf 3.55 & \cellcolor{oursrow}5.05 & \cellcolor{oursrow}\bf 6.78 & \cellcolor{oursrow}\bf 2.35 & \cellcolor{oursrow}3.77 & \cellcolor{oursrow}\bf 5.55 & \cellcolor{oursrow}\bf 7.17 & \cellcolor{oursrow}5.35 \\
& DPO           & 3.30 & \bf 6.12 & 5.67 & 2.25 & \bf 3.90 & 5.10 & 6.67 & \bf 6.50 \\
& RepE          & 2.90 & 4.08 & 6.70 & 1.95 & 3.55 & 5.10 & 6.15 & 5.35 \\
& Static-SAE    & 2.15 & 5.15 & 5.53 & 2.00 & 2.95 & 5.10 & 6.50 & 5.25 \\
& Prompt Eng    & 3.10 & 5.10 & 6.55 & 1.85 & 3.27 & 4.68 & 6.53 & 5.70 \\
& Base Model    & 3.33 & 3.90 & 5.86 & 1.95 & 3.45 & 5.12 & 6.55 & 4.92 \\
\bottomrule
\end{tabular}
}
\end{table*}

\section{Personalization Experiment}
\label{app:personalization}

We also tested whether DSPA's interpretable latent interventions can support lightweight personalization. We use Gemma-2-9B with the same output SAE as in the main experiments. We curate 200 prompts containing varied content from the PersonalLLM dataset \citep{zollo2025personalllmtailoringllmsindividual}, with an 80/20 train-test split. On the train set, we measure how ablating each candidate latent changes eight deterministic style axes relative to the base model, as shown in Table~\ref{tab:personalization_latent_axes}. We then construct persona-specific interventions either by selecting the top five latents with the highest estimated utility gain or by ablating all latents with positive estimated gain.

Evaluation is performed on the test set, where we report two complementary scoring protocols. First, we use a deterministic axis rubric: each response is scored on the eight axes and aggregated into a scalar utility under a fixed persona-specific weight vector. Second, we evaluate against the ten reward models released with PersonalLLM \citep{zollo2025personalllmtailoringllmsindividual}, reporting mean judge $z$-scores (Table~\ref{tab:personalization_personalllm}). For the latter method, we test two variants of the DSPA-based method, judge-axis and judge-contrast. Judge-axis DSPA selects latents using the deterministic eight-axis personalization rubric: we estimate how each latent shifts the hand-designed style axes, then rank latents by their predicted utility under a judge-specific preference profile derived from those axes. Judge-contrast DSPA instead selects latents directly from paired reward differences, favoring latents whose ablation most improves responses preferred by a given PersonalLLM judge over responses it disfavors, without going through the intermediate axis decomposition. We test against the following baselines:
\begin{enumerate}
    \item \textbf{Instruction.} For each persona, we provide the model with a short natural-language description of that user's preferences and ask it to answer in a way that matches those preferences, without retrieving any example responses or modifying the model's activations.
    \item \textbf{Few-shot ICL.} Following the most successful intervention from prior work \citep{zollo2025personalllmtailoringllmsindividual}, we prepend one or five persona-matched example prompt-response pairs to the test prompt and then decode normally, without any latent intervention. The pairs are selected from the training set based on cosine similarity to the current prompt.
\end{enumerate}

Across both protocols, latent ablation shows promise as a complementary personalization mechanism, though prompt-based baselines remain strong overall. On the deterministic rubric, DSPA alone (top 5) improves over the base model but trails both instruction-based and few-shot ICL methods; however, combining DSPA with 5-shot ICL yields the best results (3.173), suggesting that latent interventions and retrieval-based methods capture complementary signals. On the reward-model protocol, DSPA-based methods outperform instruction prompting but do not reach the performance of few-shot ICL. These results indicate that DSPA's stylistic interventions can augment other personalization strategies but are not yet a standalone substitute for retrieval-based approaches.

\begin{table}[htb]
\centering
\small
\caption{Personalization: deterministic utility (higher is better).}
\label{tab:personalization_deterministic}
\begin{tabular}{lc}
\toprule
Method & Mean utility \\
\midrule
Base model & 2.400 \\
DSPA (unconstrained) & 2.431 \\
DSPA (top 5) & 2.539 \\
Instruction & 2.573 \\
5-shot ICL & 2.875 \\
5-shot ICL + DSPA (top 5) & 2.888 \\
5-shot ICL + DSPA (unconstrained) & \textbf{3.173} \\
\bottomrule
\end{tabular}
\end{table}

\begin{table}[htb]
\centering
\small
\caption{PersonalLLM reward-model evaluation (mean judge $z$-score over 10 reward models; higher is better).}
\label{tab:personalization_personalllm}
\begin{tabular}{lc}
\toprule
Method & Mean judge $z$-score \\
\midrule
Instruction & -2.239 \\
Judge-axis DSPA (top 5) & -2.128 \\
Judge-contrast DSPA (top 5) & -2.061 \\
1-shot ICL & -1.973 \\
5-shot ICL & \textbf{-1.853} \\
\bottomrule
\end{tabular}
\end{table}

Table~\ref{tab:personalization_latent_axes} shows the five single-latent ablations with the largest total absolute shift across the eight deterministic axes on the train split. The dominant patterns are broad discourse/style effects rather than clean one-axis controls: for example, latent 6345 most strongly shifts conciseness and proactivity, while latent 3491 strongly increases structure. This supports the view that DSPA personalization operates through reusable stylistic features.

\begin{table*}[t]
\centering
\small
\caption{Mean train-set axis shift $\Delta$ after ablating the five most influential latents (positive toward the left side of each axis).}
\label{tab:personalization_latent_axes}
\begin{tabular}{lccccc}
\toprule
Axis & 6345 & 3491 & 1077 & 13585 & 4260 \\
\midrule
Conciseness vs. verbosity & -0.106 &  0.000 & -0.044 &  0.019 &  0.044 \\
Friendliness vs. professionalism &  0.031 & -0.025 &  0.019 &  0.031 &  0.031 \\
Directness vs. diplomacy & -0.056 & -0.081 & -0.062 & -0.100 & -0.069 \\
Structure vs. freeform &  0.100 &  0.219 & -0.050 &  0.050 &  0.000 \\
Task focus vs. relational focus & -0.019 & -0.019 & -0.069 & -0.038 & -0.044 \\
Proactive vs. minimal scope & -0.094 & -0.006 & -0.019 & -0.069 & -0.056 \\
Certainty vs. caveating & -0.044 & -0.075 & -0.075 & -0.056 & -0.081 \\
Creativity vs. literalism &  0.006 & -0.013 & -0.050 &  0.013 & -0.025 \\
\bottomrule
\end{tabular}
\end{table*}

\section{Token-Level Intervention Audit}
\label{app:token_audit}

We instrument the full decoding trace for two MT-Bench examples: the Hawaii travel-blog prompt shown in Figure~\ref{fig:example} (question 81) and an email-template prompt requesting feedback on a quarterly financial report (question 82). Table~\ref{tab:token_audit} reports manually selected transition points at which an ablated feature was active. This heuristic audit demonstrates what DSPA makes directly observable---the edited feature index, its activation, and its token position---while the accompanying interpretations remain approximate and do not by themselves establish causal attribution.

\begin{table*}[t]
\centering
\scriptsize
\setlength{\tabcolsep}{4pt}
\caption{Illustrative token-level audit from two instrumented MT-Bench traces. Text excerpts and transition points were manually selected; interpretations were generated using the pipeline in Appendix~\ref{app:interp-prompts}.}
\label{tab:token_audit}
\begin{tabular}{@{}cp{4.1cm}p{3.3cm}ccp{4.0cm}@{}}
\toprule
QID & DSPA text at transition & Base-model alternative & Feature & Activation & Approximate feature interpretation \\
\midrule
81 & ``Here's an example of a draft blog post: Recently, I had the great pleasure of traveling to the beautiful island of Hawaii.'' & ``I would start by briefly summarizing my trip, ...'' & 6345 & 27.66 & Polite opening phrases that frame what follows \\
81 & ``Hawaii is also home to many unique cultural attractions. The Pearl Harbor Historic Sites ...'' & ``I was lucky enough to experience some wonderful cultural activities ...'' & 4260 & 9.70 & Discourse markers that signal a shift or continuation in thought \\
81 & ``Another popular attraction is the Iao Valley, home to a beautiful 300-foot waterfall ...'' & ``For great shopping, dining, and nightlife, ...'' & 1077 & 39.58 & Introductory phrases that set up a question or topic \\
82 & ``Dear Supervisor, I am reaching out today regarding the attached Quarterly Financial Report.'' & ``Hi there, I am writing to seek your feedback ...'' & 6345 & 18.43 & Polite opening phrases that frame what follows \\
82 & ``I believe this report clearly demonstrates the positive financial performance of our company ...'' & ``The report provides a number of key insights ...'' & 11641 & 31.78 & Sentence-initial grammatical tokens marking basic structure \\
82 & ``In particular, I am interested in your thoughts on the organization and clarity of the report ...'' & ``What specifically are your thoughts on the level of detail ...'' & 3869 & 16.91 & Friendly, polite conversational lead-ins and transitional phrases \\
\bottomrule
\end{tabular}
\end{table*}

As a second, independently constructed description, the single-feature style-axis ablations in Appendix~\ref{app:personalization} associate the most modified features with greater concision and directness and fewer caveats, consistent with the observed output differences. These converging heuristics support inspectability of the intervention trace, not perfect semantic interpretation of every latent.

\section{Feature Interpretation}
\label{app:interp}

\subsection{Prompts}
\label{app:interp-prompts}

We use the following prompt to generate human-readable feature interpretations from \texttt{gpt-5-mini}:
\lstset{
  basicstyle=\ttfamily\small,
  frame=single,
  breaklines=true,
  columns=fullflexible
}

\begin{lstlisting}
System: You are an interpretability researcher focused on overall trends. Use the provided snippets as evidence, and tolerate some counterexamples. Only mark a feature as unreliable if you are genuinely unsure.

User: Latent {latent_idx} in an SAE. Provide a concise explanation (<=2 sentences). If you are genuinely unsure, return reliable=false. In the snippets, tokens wrapped in [[double brackets]] are where the latent is active; use surrounding context to interpret what the latent represents. It is OK if the pattern has some exceptions.
Return JSON with keys: reliable (bool), explanation (string or null), evidence (string).
High-activation snippets: {snippets}
\end{lstlisting}

We again use \texttt{gpt-5-mini} with the following prompt to assign categories to each interpretation. Because the Safety and Intent categories had relatively few members, for clarity, we folded those into Content and Discourse respectively when reporting results in Section~\ref{sec:interp}.
\begin{lstlisting}
You are given a short description of a language feature (an SAE latent interpretation). Assign it to exactly one of the following categories based on its primary function. If multiple categories seem plausible, choose the most dominant one. Return only the category name, with no explanation.

Categories:
Safety - Language related to harm, illegality, sensitive activities, private data, refusal, caution, or policy-related framing (e.g. illegal activities, self-harm, drugs, personal data requests).
Discourse - Language that manages conversational flow or interaction rather than content (e.g. polite lead-ins, acknowledgements, hedging, topic shifts, explanation onsets).
Intent - Language that frames what kind of action or response is being requested or provided (e.g. asking for instructions, step-by-step guidance, advice framing, question vs answer signaling).
Structure - Tokens whose primary role is positional or structural rather than semantic (e.g. sentence-initial markers, clause boundaries, list markers, punctuation-adjacent or formatting tokens).
Grammatical - Closed-class or near-closed-class grammatical function words with minimal standalone meaning (e.g. articles, pronouns, auxiliaries, conjunctions, generic connectors).
Content - Topic-bearing or referential language with substantive semantic meaning (e.g. named entities, domains like politics or health, concrete objects or actions, lexical patterns like words starting with a specific prefix).
Feature description:
{description}
\end{lstlisting}

\subsection{Ablate and Augment Set Features}

Interpretations for the 50 ablate set and 50 augment set features identified in Section~\ref{sec:interp} are shown in Table~\ref{tab:ablate_set} and Table~\ref{tab:augment_set}, respectively.

\begin{table*}[t]

\caption{Ablate-set features (50), with \texttt{gpt-5-mini} interpretations.}
\label{tab:ablate_set}
\small
\centering
\begin{tabular}{@{}lp{12cm}@{}}
\toprule
\bf Feature & \bf Interpretation \\
\midrule
11569 & Filler phrases used to smooth conversation and request clarification \\
10776 & Questions or statements involving illegal or clearly harmful activities \\
6345 & Polite opening phrases that frame what follows \\
11287 & First-person statements expressing personal views or experiences \\
12550 & Short connective phrases linking ideas across clauses \\
1077 & Introductory phrases that set up a question or topic \\
1810 & Brief acknowledgements and generic conversational replies \\
10995 & Common opening phrases that ease into an explanation \\
4260 & Discourse markers that signal a shift or continuation in thought \\
6167 & Phrases that signal an explanation or reasoning is about to follow \\
8646 & General noun phrases referring to people, roles, or categories \\
5681 & Conjunctions commonly used at the start of multi-part statements \\
3107 & Direct address phrases used at the beginning of a reply \\
14500 & Very common function words and basic grammatical glue \\
13585 & Requests asking for instructions or methods to achieve something \\
6782 & Topic-introducing phrases that mark a new segment of thought \\
11172 & Frequently used lead-in phrases starting an explanation \\
15059 & Conversational opening phrases that introduce a question or new topic \\
3491 & Common everyday words and short phrases spanning many unrelated topics \\
15061 & Prefatory phrases that clarify, hedge, or gently frame what follows \\
13208 & Short topical phrases that mark what is being asked about or explained \\
3869 & Friendly, polite conversational lead-ins and transitional phrases \\
11799 & Very common short words used to open questions or continue dialogue \\
5327 & Requests asking for step-by-step instructions or practical guidance \\
5705 & Small grammatical connectors linking clauses or introducing verb phrases \\
3605 & Structural framing phrases that set up how information will be presented \\
15907 & Compact noun or verb phrases carrying the main focus of a statement \\
13819 & Discourse cue words that introduce, reference, or frame content \\
15659 & Signals marking the start of a substantive explanatory continuation \\
3142 & Food, cooking, and hosting-related language and recipe contexts \\
14408 & Short function words and connective fragments near punctuation \\
12655 & Generic introductory phrases that begin an explanation or transition \\
4456 & Engaging, agreeable openings that signal readiness to explain or list \\
15107 & General action- or advice-framing language introducing recommendations \\
15168 & Very common function words with minimal standalone semantic content \\
6909 & Clear explanatory or instructional language answering a question \\
11641 & Sentence-initial grammatical tokens marking basic structure \\
8180 & Clarifying lead-ins that refine or follow up \\
10118 & Short grammatical tokens like pronouns and auxiliaries \\
5688 & Sentence-opening filler with little consistent meaning \\
9580 & Openings of short, direct questions \\
13493 & Small connective words linking clauses \\
437 & Question-framing cue words and auxiliaries \\
5811 & High-level framing words introducing a topic or question \\
8003 & Common pronouns and conjunctions as grammatical glue \\
2842 & Pronouns and short connectors common in conversation \\
4035 & Short determiners, pronouns, and auxiliaries \\
14791 & Generic conversational utility phrases \\
9631 & First-person statements expressing intentions or requests \\
11855 & Connective function words continuing phrases \\
\bottomrule
\end{tabular}

\end{table*}

\begin{table*}[t]
\caption{Augment-set features (50), with \texttt{gpt-5-mini} interpretations.}
\label{tab:augment_set}
\small
\centering
\begin{tabular}{@{}lp{12cm}@{}}
\toprule
\bf Feature & \bf Interpretation \\
\midrule
141 & Common function words and grammatical connectors without semantic content \\
9825 & Clarification and intent-seeking phrases linking questions and follow-ups \\
10075 & Topic-naming or subject-introducing words and phrases \\
1030 & Short list items and connectors enumerating entities or phrases \\
11031 & Affirmative explanatory openings that introduce a helpful continuation \\
8244 & Contractions and possessive markers using apostrophes \\
15634 & Sentence- or clause-initial tokens marking utterance beginnings \\
4891 & Polite request openings introducing questions or actions \\
13301 & Conversational scaffolding words framing dialogue or transitions \\
2969 & Everyday life topics and general practical advice contexts \\
6268 & Clause-initial function words like articles and auxiliaries \\
11779 & Hedging or softening discourse openings before explanations \\
7569 & Turn-initial or sentence-initial position markers \\
6509 & Place names, locations, and location-focused references \\
6586 & Very short replies handling brief or provocative inputs \\
2135 & Prompts ending with a focused keyword or target phrase \\
9304 & Response-leading fragments introducing explanations or suggestions \\
5314 & Question phrasing and self-referential statements expressing personal perspective \\
5049 & Auxiliary verbs and short grammatical continuations at clause boundaries \\
13656 & Introductory interjections and brief discourse markers near turn starts \\
8172 & Location and containment references like "in", "in my area", "address" \\
4263 & Key topical noun phrases defining the main semantic focus \\
12827 & Frequent topical nouns and short subject phrases across contexts \\
866 & Common connectors like "or/of/the" in lists or alternatives \\
6143 & Clause-beginning tokens and common connectors near sentence starts \\
1820 & Very common function words and generic everyday connective tokens \\
8543 & Conversational lead-ins introducing instructions or explanations \\
15344 & High-frequency stopwords and short boundary tokens like "the/from/to" \\
1477 & Concise direct question forms often beginning with wh-words \\
7375 & High-frequency grammatical connectors like "are/this/the/but/is" \\
12366 & Short glue phrases and function words near sentence or turn boundaries \\
8229 & Initial tokens of named entities and multiword noun phrases \\
16271 & Generic conversational openings and short transition phrases \\
14204 & Salient topical keywords and notable content-bearing terms \\
5426 & Informal conversational lead-ins, hedges, and smalltalk-style discourse markers \\
16033 & Polite explanatory framing phrases introducing clarification or help \\
13130 & Introductory phrases that open a new question or topic \\
6532 & Very common function words and generic connective fillers \\
3594 & Tokens marking clause boundaries and new structural segments \\
5416 & Openings that begin or continue an explanatory reply \\
9670 & Common connective words used during explanation or elaboration \\
13329 & Direct question phrasing signaling a request for information \\
2610 & Utterance-initial phrases starting a new topic or inquiry \\
613 & Openings that introduce explanations, lists, or examples \\
7264 & Words referencing the main topic or subject under discussion \\
15264 & Short grammatical connectors like prepositions and determiners \\
3895 & Utterance-initial tokens marking the very start of a clause \\
14998 & Short function words anchoring clause starts or discourse pivots \\
2208 & Sentence beginnings introducing concrete advice or factual responses \\
10754 & Tokens marking the start of a new question or request \\
\bottomrule
\end{tabular}

\end{table*}

\end{document}